\newcommand{\cmark}{\ding{51}}%
\newcommand{\xmark}{\ding{55}}%
\def\eqref#1{equation~\ref{#1}}
\def\1{\bm{1}}
\DeclareMathAlphabet{\mathsfit}{\encodingdefault}{\sfdefault}{m}{sl}
\SetMathAlphabet{\mathsfit}{bold}{\encodingdefault}{\sfdefault}{bx}{n}
\newcommand{\KL}{D_{\mathrm{KL}}}
\DeclareMathOperator*{\argmin}{arg\,min}
\definecolor{darkgreen}{RGB}{0, 128, 27}
\newcommand\eqdef{\coloneqq}
\newcommand{\modify}[1]{#1}
\newcommand{\diff}{\mathrm{d}}
\renewcommand{\KL}[2]{D_{\mathrm{KL}}\left({#1} \; \| \; {#2} \right)}
\DeclareRobustCommand\onedot{\futurelet\@let@token\@onedot}
\def\@onedot{\ifx\@let@token.\else.\null\fi\xspace}
\newcommand{\Ec}{\mathcal{E}}
\newcommand{\Fc}{\mathcal{F}}
\newcommand{\Hc}{\mathcal{H}}
\newcommand{\Jc}{\mathcal{J}}
\newcommand{\Lc}{\mathcal{L}}
\newcommand{\Nc}{\mathcal{N}}
\newcommand{\Pc}{\mathcal{P}}
\newcommand{\Sc}{\mathcal{S}}
\newcommand{\Tc}{\mathcal{T}}
\newcommand{\Uc}{\mathcal{U}}
\newcommand{\Eb}{\mathbb{E}}
\newcommand{\Rb}{\mathbb{R}}
\newcommand{\fv}{\mathbf{f}}
\newcommand{\sv}{\mathbf{s}}
\newcommand{\uv}{\mathbf{u}}
\newcommand{\wv}{\mathbf{w}}
\newcommand{\xv}{\mathbf{x}}
\newcommand{\yv}{\mathbf{y}}
\newcommand{\zv}{\mathbf{z}}
\newcommand{\Iv}{\mathbf{I}}
\newcommand{\epsilonv   }{\boldsymbol \epsilon   }
\newcommand{\thetav     }{\boldsymbol \theta     }
\newcommand{{\phiv}     }{\boldsymbol \phi       }
\newcommand{\BlackBox}{\rule{1.5ex}{1.5ex}}  
\def\QED{~\rule[-1pt]{5pt}{5pt}\par\medskip}
\newenvironment{proof}{\par\noindent{\em Proof:\ }}{\hfill\BlackBox\\}
\newtheorem{lemma}{Lemma}
\Crefname{appendix}{Appx}{Appx}
\renewcommand{\eqref}[1]{(\ref{#1})}
\title{SFBD-OMNI: Bridge models for lossy measurement restoration with limited clean samples}
\author{Haoye Lu$^{\textbf{1,2}}$, Yaoliang Yu$^{\textbf{1,2}}$ \& Darren Lo$^{\textbf{1}}$ \\
$^{1}$School of Computer Science, University of Waterloo \\
$^{2}$Vector Institute \\
\texttt{\{haoye.lu,yaoliang.yu,dlslo\}@uwaterloo.ca}
}
\begin{document}

\maketitle

\begin{abstract}
In many real-world scenarios, obtaining fully observed samples is prohibitively expensive or even infeasible, while partial and noisy observations are comparatively easy to collect. In this work, we study distribution restoration with abundant noisy samples, assuming the corruption process is available as a black-box generator. We show that this task can be framed as a one-sided entropic optimal transport problem and solved via an EM-like algorithm. We further provide a test criterion to determine whether the true underlying distribution is recoverable under per-sample information loss, and show that in otherwise unrecoverable cases, a small number of clean samples can render the distribution largely recoverable. Building on these insights, we introduce SFBD-OMNI, a bridge model-based framework that maps corrupted sample distributions to the ground-truth distribution. Our method generalizes Stochastic Forward-Backward Deconvolution (SFBD; Lu et al., 2025) to handle arbitrary measurement models beyond Gaussian corruption. Experiments across benchmark datasets and diverse measurement settings demonstrate significant improvements in both qualitative and quantitative performance.
\end{abstract}

\section{Introduction}
\label{sec:intro}
Diffusion-based generative models \citep{DicksteinWMG2015, HoJA2020, SongME2021, SongDCKKEP2021, SongDCS2023} have attracted growing interest and are now regarded as one of the most powerful frameworks for modelling high-dimensional distributions. They have enabled remarkable progress across various domains \citep{CroitoruHIS2023}, including image \citep{HoJA2020, SongME2021, SongDCKKEP2021, RombachBLEO2022}, audio \citep{kong2021diffwave, YangYWWWZY2023}, and video generation \citep{HoSGCNF2022}. Today, most state-of-the-art image and video generative models are diffusion-based or their variants, such as flow matching \citep{LipmanCBNL2023} and consistency models \citep{SongDCS2023}.

While much of their success is attributed to stable training dynamics, diffusion models (DMs), like nearly all other generative frameworks, also depend on large collections of high-quality training data. In many practical domains, however, such data are costly or even infeasible to obtain, whereas large volumes of corrupted samples are readily available. For example, in medical imaging, acquiring cleaner X-ray scans requires higher radiation doses, which can endanger patient health \citep{Seibert2008}, making most available scans inherently noisy. Likewise, in ground-based astronomical imaging, clean deep-space observations demand long exposures under ideal atmospheric conditions, yet most telescope images are degraded by atmospheric turbulence, sensor noise, and light pollution \citep{ChimittC2023}. 

Given this reality, a natural question arises: \textit{With only a limited number of clean samples but an abundance of corrupted ones, can we train a model to recover the clean sample distribution?} Under suitable identifiability conditions on the corruption process, \citet{BoraPD2018} demonstrated that a generative model can indeed be trained using only corrupted samples, by leveraging the GAN framework \citep{Goodfellow2016}. Building on this idea and the remarkable success of diffusion models, subsequent works have sought to recover data distributions under specific corruption processes--for example, Ambient Diffusion for pixel masking \citep{DarasA2023}, Tweedie Diffusion \citep{DarasSDGDK2023}, and Stochastic Forward-Backward Deconvolution (SFBD, \citealt{LuWY2025}) for additive Gaussian noise. However, to the best of our knowledge, there is no existing framework that both accommodates general corruption processes and theoretical guarantees, while exploiting the advantages of diffusion models. A more detailed review of related literature is provided in \cref{appx:related}.

In this work, we address this gap by proposing a principled framework for the distribution recovery problem through diffusion-based models. Instead of formulating distribution learning as a min-max game via the variational representation of the Kullback-Leibler (KL) divergence, as in GANs, we show that an alternative variational form, provided by the Donsker-Varadhan principle \citep{DonskerV1983}, reveals the problem to be essentially equivalent to a one-sided entropic optimal transport objective. This reformulation naturally yields an alternative minimization pipeline that fully leverages the design advantages of diffusion-based models. Importantly, our approach avoids adversarial training, making it both simpler to implement and more stable in practice. Since the method can be viewed as a generalization of the SFBD algorithm, we refer to it as \textit{SFBD-OMNI}.

Under suitable identifiability conditions on the corruption process, the proposed method is theoretically guaranteed to recover the ground-truth clean data distribution. For practical corruption processes that do not satisfy these conditions, we further show that the clean distribution can still be largely recovered when a limited number of clean samples are available, and we provide convergence guarantees for this setting. Since the proposed alternating minimization algorithm requires training a sequence of neural networks, we also introduce an online variant that enables end-to-end training, simplifying implementation and potentially accelerating convergence, while still preserving optimality guarantees. Empirical results corroborate our theoretical analysis, and experiments across benchmark datasets demonstrate significant and consistent improvements over strong baselines under diverse measurement settings. A key strength of SFBD-OMNI is its robustness in scenarios where the identifiability condition fails: by incorporating a small number of clean samples, the method is still able to effectively guide recovery toward the true data distribution. The implementation is available at \href{https://github.com/watml/SFBD-omni.git}{https://github.com/watml/SFBD-omni.git}.



%

\section{Preliminary}
\label{sec:prel}
\textbf{Diffusion models and SFBD.} Diffusion models learn distributions by progressively corrupting data with Gaussian noise and then training a model to approximate the reverse process through successive denoising steps. Formally, given a distribution $\mu$ over $\Rb^d$, the forward process is governed by a stochastic differential equation (SDE):
\begin{align}
	\diff \xv_t = \diff \wv_t, ~~\textrm{$\xv_0 \sim \mu$} \label{eq:fwd_diff}
\end{align}
where $\{\wv_t\}_{t\in [0,T]}$ is the standard Brownian motion. \cref{eq:fwd_diff} induces a transition kernel 
\(
    p_{t|s}(\xv_t | \xv_s) = \Nc(\xv_0, (t-s) \, \Iv)
\) for $t \geq s \geq 0$. Let $ p^\mu_t(\xv_t) = \int p_{t|s}(\xv_t \vert \xv_0) \, \mu(\xv_0) \, \mathrm{d} \xv_0 $ denote the marginal distribution of $\xv_t$ (in particular, $p^\mu_0 = \mu$). \citet{anderson1982} showed that the backward SDE can describe the time-reversed process corresponding to the forward SDE:
\begin{align}
	\diff \xv_t = - \sv(\xv_t, t) \diff t + \diff \bar\wv_t, ~\xv_\tau \sim p_\tau \label{eq:anderson_bwd},
\end{align}
where $\tau > 0$, $\bar{\wv}_t$ is standard Brownian motion in reverse time and $\sv(\cdot, t) = \nabla \log p_t (\cdot)$ is the score function. In practice, the score can be efficiently approximated via a neural network $\sv_{\thetav}$ trained by minimizing the conditional score matching loss $\Lc_\text{CSM}(\sv_{\thetav}, \mu)$ \citep{SongDCKKEP2021}. Crucially, this reverse SDE induces transition kernels that coincide with the posterior of the forward process: 
\begin{align}
	p^\mu_{s|t}(\xv_s | \xv_t) = \tfrac{p_{t|s}(\xv_t | \xv_s) \,  p^\mu_s(\xv_s)}{p^\mu_t(\xv_t)}, \quad \text{for $s \leq t$ in $[0, \tau]$.}\label{eq:diffusion_posterior}
\end{align}
Consequently, sampling from $p^\mu_{s|\tau}(\xv_s \mid \xv_\tau)$ can be carried out by integrating \cref{eq:anderson_bwd} backward from $\xv_\tau$ with $t=\tau$. In standard diffusion models, $\tau$ is chosen sufficiently large so that $p^\mu_\tau \approx \Nc(0, \tau \Iv)$. Thus, sampling from the model amounts to drawing $\xv_\tau \sim \Nc(0, \tau \Iv)$ followed by $\xv_0 \sim p^\mu_{0|\tau}(\xv_0 \mid \xv_\tau)$.

In contrast, SFBD \citep{LuWY2025} operates in the regime of finite $\tau$, specifically considering a Gaussian corruption process realized through the forward transition kernel $p_{\tau|0}(\xv_\tau \mid \xv_0)$. In particular, they assume access to a limited set of clean samples $\Ec_\text{clean}$ and a large set of Gaussian corrupted ones $\Ec_\text{noisy}$ obtained through this forward transition kernel. For a set of samples $\Ec$, let $p_\Ec$ denote the corresponding empirical distribution. Starting from a pretrained model $\sv_{\thetav_0}$ by minimizing $\Lc_\text{CSM}(\sv_{\thetav}, \Ec_\text{clean})$, the algorithm proceeds as follows: for $k = 1, 2, \ldots, K$
\begin{align}
\Ec_k &\leftarrow \{\, \xv_0 : \yv \in \Ec_\text{noisy},\;
  \text{solve \cref{eq:anderson_bwd} from } t=\tau \text{ to } 0, \text{ with }
    \xv_\tau=\yv \text{ and } \sv=\sv_{\thetav_{k-1}}. \,\} \label{eq:SFBD_sampling_step} \\[0.5em]
\thetav_k &\leftarrow \text{Continue training } \sv_{\thetav_{k-1}} \text{ to obtain } \sv_{\thetav_k}
    \text{ by minimizing } \Lc_\text{CSM}(\sv_{\thetav}, \Ec_k) \label{eq:SFBD_training_step}
\end{align}
\citet{LuWY2025} proved that as $K \to \infty$, $p_{\Ec_K}$ converges to the true distribution $p_\text{data}$ by analyzing the evolution of the underlying stochastic processes, leveraging the relation in \cref{eq:diffusion_posterior} for all $(s,t) \in [0,\tau]$. However, this relation is inherently tied to the Gaussian forward corruption process in \cref{eq:fwd_diff}, which makes extending the approach to arbitrary corruption processes challenging.

Interestingly, the sampling step \eqref{eq:SFBD_sampling_step} essentially corresponds to drawing from $p^\mu_{0|\tau}$ with $\mu = p_{\Ec_{k-1}}$. This observation suggests that, rather than enforcing the posterior relation in \eqref{eq:diffusion_posterior} for all $(s,t) \in [0,\tau]$ and learning it via score function approximation in \eqref{eq:anderson_bwd},  it may be sufficient to train a model that learns only the posterior $p^\mu_{0|\tau}$. In this case, we may extend the forward kernel $p_{\tau|0}$ to arbitrary corruption processes. Indeed, when the corruption process satisfies suitable identifiability conditions, a generalized SFBD method can be employed to recover the data distribution, as we will show in \cref{sec:SFBD-OMNI}. We conclude this section by presenting a unified framework to learn $p^\mu_{0|\tau}$ with bridge models.

\textbf{Learning posterior distributions with bridge models.}
Unlike standard diffusion models, which learn to transform Gaussian noise into data samples via the backward SDE \eqref{eq:anderson_bwd}, bridge models generalize this idea to transformations between arbitrary distributions \citep{LipmanCBNL2023, Peluchetti2023, ZhouLKE2024}. Given paired samples $(\xv, \yv) \sim \pi(\xv, \yv)$ from a joint distribution $\pi$, a bridge model constructs a distributional path connecting the $x$-marginal $\pi_x$ and the $y$-marginal $\pi_y$ by interpolating between each pair $(\xv, \yv)$ through transition processes \citep{Peluchetti2023}. Typical choices include line segments in flow matching and rectified flow \citep{LiuGL2022, LipmanCBNL2023}, or Brownian bridges in DDBM and I2SB \citep{LiuVHTNA2023, ZhouLKE2024}. The resulting process defines a transition path distribution $p_{t\mid 01}(\xv_t \mid \xv_0=\xv, \xv_1=\yv)$, whose evolution from $t=1$ to $0$ can often be expressed in closed form via a backward SDE \citep{Peluchetti2023}:
\begin{align}
\diff \xv_t = \fv(\xv_t; \xv_0, \xv_1, t)\,\diff t + g(t) \,\diff \bar \wv_t. \label{eq:cond_interpolation}
\end{align}
Let $\fv_{\thetav}(\xv_t; \xv_1, t)$ be the minimizer of the conditional drift matching (CDM) loss
\begin{align}
\Lc_\text{CDM}(\thetav, \pi) 
   = \Eb_{t \sim \Uc}\, \Eb_{(\xv_0, \xv_1) \sim \pi}\, 
     \Eb_{\xv_t \sim p_{t \mid 01}}
     \big\|\, \fv(\xv_t; \xv_0, \xv_1, t) 
        - \fv_{\thetav}(\xv_t; \xv_1, t) \,\big\|^2, \label{eq:mean_interpolation}
\end{align}
where $\Uc$ is a sampling distribution over $t \in (0,1)$. It then follows that samples from $\pi_{0 \mid 1}(\xv_0 \mid \yv)$ can be obtained by integrating from $t=1$ to $0$ with $\xv_1 = \yv$ \citep{Peluchetti2023,BortoliLTTN2023}:\footnote{If $\xv$ and $\yv$ are connected by a deterministic path (i.e., $g=0$ in \cref{eq:cond_interpolation}), the sampling process may become ill-conditioned, as it degenerates to a deterministic mapping. To mitigate this, $\yv$ can be perturbed with a small Gaussian noise during both training and sampling. See \cref{appx:gaussian_noise_reg_deter_sampling} for details.}
\begin{align}
\diff \xv_t = \fv_{\thetav}(\xv_t; \xv_1, t)\,\diff t + g(t)\,\diff \bar \wv_t. \label{eq:bridge_sampling}
\end{align} 
In this way, given a Markov kernel $r(\yv \mid \xv)$ for a general corruption process and a sample distribution $\mu$, the joint distribution of $(\xv, \yv)$ is $\pi(\xv, \yv) = \mu(\xv)\, r(\yv \mid \xv)$. A bridge model can then be trained to learn the posterior distribution in a manner analogous to diffusion models, using a CDM loss $\Lc_\text{CDM}$ corresponding to the chosen transition process.



\section{Kullback–Leibler Ambient Projection Problem}
\label{sec:amb_KL_proj}
Let $r( \cdot \mid \xv)$ denote the Markov kernel for the corruption process. Define the corresponding corruption operator $\Tc_{r}$, which maps a clean distribution $\mu$ to its corrupted counterpart:
\begin{align}
    \Tc_r \mu \, (\yv) \coloneqq \int r(\yv \mid \xv) \, \mu(\xv) \, \diff \xv. \label{eq:def_Tr}
\end{align}
Given the corrupted data distribution $q \coloneqq \Tc_r p_{\text{data}}$, our objective, following the classical GAN formulation in AmbientGAN \citep{BoraPD2018}, is to recover $p_{\text{data}}$ by solving
\begin{align}
    p^\ast = \argmin\nolimits_{p} \, \KL{q}{\Tc_r p}. \label{eq:org_problem_setting}
\end{align}
The intuition is that minimizing the discrepancy between corrupted distributions drives $p$ toward the true clean distribution $p_{\text{data}}$. We refer to this optimization task as the Kullback–Leibler Ambient Projection (KLAP) problem.

\subsection{Identifiability}
\label{sec:identifiability}
Whether the recovery is possible depends on the choice of the corruption kernel $r(\cdot \mid \yv)$. 
For instance, if $x$ is an image and $r(\cdot \mid \yv)$ always outputs a white patch, then the corrupted distribution $q = \Tc_r p_{\text{data}}$ collapses to a single point mass on the white patch, regardless of $p_{\text{data}}$. In this degenerate case, every distribution $p$ achieves the same objective value in \eqref{eq:org_problem_setting}, so the minimizer $p^\ast$ need not equal the true distribution $p_{\text{data}}$.
The next proposition characterizes when minimizing \eqref{eq:org_problem_setting} recovers $p^\ast$.



\begin{restatable}[Identifiability Condition]{proposition}{Identifiability}
\label{prop:identifiability}
Let $\Pc(X)$ denote the set of clean sample distributions. 
When the corruption kernel $r(\cdot \mid \xv)$ depends continuously on $\xv$, the convex objective in \cref{eq:org_problem_setting} admits a unique minimizer $p^\ast = p_{\text{data}}$ whenever $\Tc_r$ is injective on $\Pc(X)$. If $\Tc_r$ is not injective, the objective is still convex, but all distributions $p$ satisfying $\Tc_r p = \Tc_r p_{\text{data}}$ are minimizers.
\end{restatable}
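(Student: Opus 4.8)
The plan is to separate the statement into three assertions and handle each in turn: (i) convexity of $p \mapsto \KL{q}{\Tc_r p}$; (ii) the fact that the minimum value equals $0$ and is attained exactly on the fiber $\{p : \Tc_r p = \Tc_r p_{\text{data}}\}$; and (iii) that this fiber is the singleton $\{p_{\text{data}}\}$ precisely when $\Tc_r$ is injective, using the continuity hypothesis on $r(\cdot \mid \xv)$ to ensure $\Tc_r p$ is a bona fide probability distribution so the KL expression is well defined.

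First I would record that $\Tc_r$ is an affine (indeed linear) map on the space of signed measures: from the defining integral \eqref{eq:def_Tr}, $\Tc_r(\alpha \mu_1 + (1-\alpha)\mu_2) = \alpha \Tc_r\mu_1 + (1-\alpha)\Tc_r\mu_2$, and it sends $\Pc(X)$ into $\Pc(Y)$. Then I would invoke the standard fact that $(\nu_1,\nu_2) \mapsto \KL{\nu_1}{\nu_2}$ is jointly convex in its two arguments; composing a jointly convex function with the affine map $p \mapsto (q, \Tc_r p)$ (holding the first slot fixed at $q$) yields a convex function of $p$ on the convex set $\Pc(X)$. This gives assertion (i) and also shows that the set of minimizers is convex. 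The continuity assumption on $r$ is what I would cite (or note is implicit in the problem setup) to guarantee $\Tc_r p$ has a density against the reference measure whenever needed, so that $\KL{q}{\Tc_r p}$ is not vacuously $+\infty$; alternatively one works directly with the measure-theoretic KL and this is automatic.

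Next, for assertion (ii), I would use the Gibbs/information inequality: $\KL{q}{\Tc_r p} \ge 0$ with equality if and only if $\Tc_r p = q$ ($q$-almost everywhere, hence everywhere for probability measures with the same dominating structure). Since $q = \Tc_r p_{\text{data}}$ by definition, the value $0$ is achieved by $p = p_{\text{data}}$, so the infimum is exactly $0$ and the minimizer set is $\{p \in \Pc(X) : \Tc_r p = \Tc_r p_{\text{data}}\}$. Finally, assertion (iii) is essentially a tautology once (ii) is in place: if $\Tc_r$ is injective on $\Pc(X)$, then $\Tc_r p = \Tc_r p_{\text{data}}$ forces $p = p_{\text{data}}$, giving uniqueness; if $\Tc_r$ is not injective, then by definition there exist distinct $p \ne p_{\text{data}}$ (or at least two distinct distributions mapping to the same image — one reduces to the relevant case by noting the fiber through $p_{\text{data}}$ is the translate of the kernel) with $\Tc_r p = \Tc_r p_{\text{data}}$, and all such $p$ are minimizers.

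The main obstacle — more a matter of care than of genuine difficulty — is the well-definedness of $\KL{q}{\Tc_r p}$: one must argue that for the minimization to be meaningful we restrict to $p$ with $q \ll \Tc_r p$, and that the continuity of $r(\cdot\mid\xv)$ together with $q \ll \Tc_r p_{\text{data}}$ makes the relevant absolute-continuity conditions hold at least at the optimum, so the "equality in Gibbs" characterization genuinely pins down $\Tc_r p = q$ rather than some weaker $q$-a.e.\ statement. A secondary subtlety is stating injectivity correctly: "injective on $\Pc(X)$" should be read as $\Tc_r p_1 = \Tc_r p_2 \Rightarrow p_1 = p_2$ for $p_1,p_2 \in \Pc(X)$, and the non-injective case then automatically produces a nontrivial set of minimizers as claimed. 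Everything else is a direct application of convexity of KL and the information inequality, so I would keep the write-up short.
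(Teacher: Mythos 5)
Your proposal is correct and follows essentially the same route as the paper: convexity comes from the linearity of $\Tc_r$ combined with convexity of $m \mapsto \KL{q}{m}$, and the minimizers are identified via the fact that the optimal value $0$ is attained exactly on the fiber $\{p : \Tc_r p = \Tc_r p_{\text{data}}\}$. The only cosmetic difference is that you get uniqueness in the injective case from the equality condition of the information inequality ($\KL{q}{\Tc_r p}=0 \Leftrightarrow \Tc_r p = q$), whereas the paper deduces it from strict convexity of $\KL{q}{\cdot}$ composed with the injective affine map; both are one-line arguments and your handling of the well-definedness and non-injective caveats matches the intended reading.
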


\begin{wrapfigure}{r}{0.4\textwidth} 
  \vspace{-1.5em}
  \centering
  \includegraphics[width=0.3\textwidth]{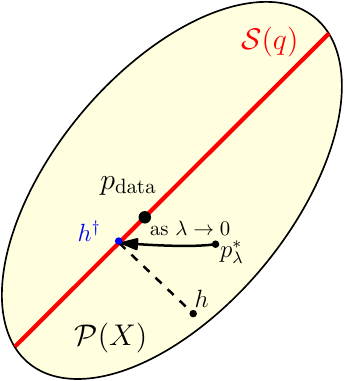}
  \caption{Effect of $\lambda$ on $p_\lambda^\ast$. As $\lambda \to 0$, the first term in \cref{eq:aug_problem_setting} ensures that $p$ remains within $\Sc(q)$, while the second term selects the element $h^\dagger \in S(q)$ closest to $h$. Consequently, $p_\lambda^\ast$ converges to $h^\dagger$, which represents the projection of $h$ onto the feasible set $\Sc(q)$.}
  \label{fig:rel_h_lambda_p_data}
  \vspace{-3em}
\end{wrapfigure}

All proofs are deferred to the appendix. We highlight several common corruption operators $\Tc_r$ together with their injectivity properties:\\[0.4em]
\textit{Additive noise.} If $\yv = \xv + \epsilonv$ with noise $\epsilon \sim \nu$, then $r(\yv \mid \xv) = \nu(\yv - \xv)$. When $\nu$ has a characteristic function without zeros (e.g., Gaussian), the induced convolution operator $p \mapsto p * \nu$ is injective. This setting corresponds to the classical density deconvolution problem \citep{Meister2009}, with SFBD \citep{LuWY2025} addressing the Gaussian case in particular.\\[0.4em]
\textit{Random dropout.}  Each pixel is masked with probability $\alpha>0$ and otherwise unchanged. It can be shown that when each pixel is masked independently, $\Tc_r$ is injective \citep{BoraPD2018}. (Non-injective when $\alpha=1$.)  \\[0.4em]
\textit{Linear transforms.} If $\yv = A \xv$ for a linear map $A$, $r(\yv \mid \xv) = \delta(\yv - A\xv)$.  If $A$ has full column rank (hence is injective), then $\Tc_r$ is also injective.  (Non-injective if $A$ has a nontrivial nullspace, such as projections or grayscale conversions of images.)

\subsection{Augmented KLAP}
\label{sec:aug_KLAP}
As noted in \cref{prop:identifiability}, if $\Tc_r$ is not injective, the objective is convex but not strictly convex. In this case, any distribution $p \in \Pc(X)$ with $\Tc_r p = \Tc_r p_{\text{data}}$ is a minimizer, and we denote this solution set by $\Sc(q)$. Thus, $p_{\text{data}}$ cannot be uniquely identified from the noisy distribution. One way to overcome this ambiguity is to incorporate additional information. In practice, this often comes from a small number of clean samples or, more generally, from a prior distribution $h$ over $p_{\text{data}}$. This motivates the following augmented formulation.

Given the corruption operator $\Tc_r$ defined in \cref{eq:def_Tr}, a \emph{prior distribution} $h$ over $p_\text{data}$, and a \emph{regularization parameter} $\lambda \geq 0$, we consider the following optimization problem:
\begin{align}
    p_\lambda^\ast 
    = \argmin\nolimits_{p \in \Pc(X)} \Jc_\lambda(p),~\quad\text{where } \; \Jc_\lambda(p) \eqdef  \KL{q}{\Tc_r p} \;+\; \lambda \, \KL{h}{p}. 
    \label{eq:aug_problem_setting}
\end{align}
For $\lambda>0$, the strict convexity of the second term ensures the entire objective is strictly convex with a unique minimizer $p_\lambda^\ast$, whereas for $\lambda=0$ it reduces to the classical ambient problem.

For intuition, consider a corruption process $r$ that maps colour images to grayscale, with $p_{\text{data}}$ consisting of human face images. Here $\Tc_r$ is not injective, since many different colourings yield the same grayscale distribution. In other words, $\Sc(q)$ contains multiple elements. Thus, when $\lambda=0$, we can recover the distribution of face structures but not the true colour patterns. To capture the full colour distribution, we may assume access to a few clean colour images from $p_{\text{data}}$ and encourage $p$ to align with their empirical distribution $h$ by choosing $\lambda>0$. 

\Cref{fig:rel_h_lambda_p_data} illustrates how the additional regularization term shapes the optimal solution. As $\lambda \to 0$, the first term in \cref{eq:aug_problem_setting} keeps $p$ within $\Sc(q)$, while the second selects the element $h^\dagger \in \Sc(q)$ closest to $h$. We formalize this observation in the following proposition.
\begin{restatable}{proposition}{Convergence}
\label{prop:limitingLambdaCase}
    Let $h^\dagger = \argmin_{p \in \Sc(q)} \KL{h}{p}$ denote the Information-projection of $h$ onto the original KLAP solution set. Then the minimizer of \cref{eq:aug_problem_setting}, $p_\lambda^\ast$, converges to $h^\dagger$ as $\lambda \to 0$.
\end{restatable}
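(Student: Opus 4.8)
The plan is to establish convergence via a standard $\Gamma$-convergence / epigraphical argument combined with the strict convexity that pins down the limit uniquely. Throughout, write $F(p) \eqdef \KL{q}{\Tc_r p}$ and $G(p) \eqdef \KL{h}{p}$, so that $\Jc_\lambda = F + \lambda G$, and recall from Proposition \ref{prop:identifiability} that $F$ is convex with minimum value $m^\ast \eqdef \min_p F(p)$ attained exactly on the set $\Sc(q) = \{p : \Tc_r p = \Tc_r p_{\text{data}}\}$. The target point $h^\dagger = \argmin_{p \in \Sc(q)} G(p)$ is well-defined and unique because $\Sc(q)$ is convex (an affine slice of $\Pc(X)$, as $\Tc_r$ is linear) and $G = \KL{h}{\cdot}$ is strictly convex on its domain.

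First I would record the sandwich inequality coming from optimality of $p_\lambda^\ast$: testing $\Jc_\lambda$ at the competitor $h^\dagger \in \Sc(q)$ gives
\begin{align}
F(p_\lambda^\ast) + \lambda\, G(p_\lambda^\ast) \;\le\; F(h^\dagger) + \lambda\, G(h^\dagger) \;=\; m^\ast + \lambda\, G(h^\dagger). \label{eq:sandwich}
\end{align}
Since $F(p_\lambda^\ast) \ge m^\ast$, this yields both $G(p_\lambda^\ast) \le G(h^\dagger)$ (for $\lambda > 0$) and $0 \le F(p_\lambda^\ast) - m^\ast \le \lambda\, G(h^\dagger) \to 0$. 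So along any sequence $\lambda_n \to 0$, the family $\{p_{\lambda_n}^\ast\}$ is a minimizing sequence for $F$ with $G$-values bounded above by $G(h^\dagger)$. Next I would extract a convergent subsequence $p_{\lambda_{n_k}}^\ast \to \bar p$ in an appropriate topology (weak convergence of probability measures; tightness follows since the feasible set and sublevel sets of $\KL{h}{\cdot}$ are tight — $\KL{h}{\cdot}$ has compact sublevel sets). By lower semicontinuity of $F$ we get $F(\bar p) \le \liminf F(p_{\lambda_{n_k}}^\ast) = m^\ast$, hence $\bar p \in \Sc(q)$; and by lower semicontinuity of $G$, $G(\bar p) \le \liminf G(p_{\lambda_{n_k}}^\ast) \le G(h^\dagger)$. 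But $h^\dagger$ is the \emph{unique} minimizer of $G$ over $\Sc(q)$, so $\bar p = h^\dagger$.

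Finally, since every subsequence of $\{p_\lambda^\ast\}_{\lambda \to 0}$ has a further subsequence converging to the same limit $h^\dagger$, the whole net converges: $p_\lambda^\ast \to h^\dagger$ as $\lambda \to 0$. I expect the main obstacle to be the topological bookkeeping in infinite dimensions: choosing the right mode of convergence on $\Pc(X)$, verifying tightness/compactness of the relevant sublevel sets, and confirming that $F = \KL{q}{\Tc_r \cdot}$ is weakly lower semicontinuous (which follows from joint lower semicontinuity of KL divergence and weak-to-weak continuity of the linear map $\Tc_r$, using the continuity of $r(\cdot\mid\xv)$ in $\xv$ as in Proposition \ref{prop:identifiability}). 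If one prefers to sidestep abstract compactness, an alternative is a purely convex-analytic route: use strong convexity of $G$ relative to some reference to get $\|p_\lambda^\ast - h^\dagger\|^2 \lesssim G(p_\lambda^\ast) - G(h^\dagger) + \langle \partial G(h^\dagger), \cdot\rangle$-type bounds from \eqref{eq:sandwich}, though this requires a Bregman/Pinsker-style inequality and care with the constraint $p \in \Pc(X)$; the $\Gamma$-convergence argument above is cleaner and I would present that as the main proof.
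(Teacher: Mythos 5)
Your proposal is correct and follows essentially the same route as the paper's proof: the same decomposition $\Jc_\lambda = F + \lambda G$, the same competitor inequality tested at $h^\dagger$, and the same compactness plus lower-semicontinuity plus uniqueness-of-$h^\dagger$ argument to identify the limit. You are in fact slightly more explicit than the paper on two points — justifying uniqueness of $h^\dagger$ via strict convexity of $\KL{h}{\cdot}$ on the convex set $\Sc(q)$, and upgrading subsequential convergence to convergence of the whole family — but these are refinements of the same argument, not a different approach.
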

\modify{
\textbf{Clean samples also matter under injective $\Tc_r$ -- Identifiability $\neq$ Recoverability.}
While \cref{prop:identifiability} shows that if $\Tc_r$ is injective, then $p_{\text{data}}$ is in principle recoverable by minimizing \cref{eq:org_problem_setting}, this guarantee relies on having access to the true corrupted density $q = \Tc_r p_{\text{data}}$. In practice, however, $q$ must be estimated from finitely many noisy samples, and the resulting estimation error is amplified through the inverse of $\Tc_r$. Consequently, the minimizer of \cref{eq:org_problem_setting} based on an empirical estimate of $q$ can deviate substantially from $p_{\text{data}}$. For additive-noise corruption operators $\Tc_r$, the unfavourable sample complexity of this inverse problem is well documented in the density deconvolution literature (see, e.g., \citet{Meister2009}), and the pessimistic rates suggest that acquiring enough noisy samples to train a high-quality model is often practically infeasible \citep{LuWY2025}. To overcome this issue, in \cref{sec:emp}, we show that even a very small number of clean samples (as few as 50) can substantially mitigate this difficulty, consistent with the findings of \citet{LuWY2025}.
}
\vspace{-0.3em}




\section{Two Variational Perspectives of KLAP}
\label{sec:variational_perspective}
\vspace{-0.3em}
In this section, we present two variational perspectives for characterizing KLAP, each derived from a different variational formulation of the KL divergence. The first perspective corresponds to the classical formulation, which was previously employed in training Ambient GANs \citep{BoraPD2018}, and is included here for completeness. The second perspective reveals that the classical KLAP can be viewed as a one-sided entropic optimal transport (OT) problem and also leads to an alternative minimization algorithm for solving both the classical and augmented KLAP formulation. 
\vspace{-0.25em}
\subsection{Ambient GAN's Formulation}
\label{sec:amb_gan_formulation}
\vspace{-0.25em}
For any convex function $f$, a corresponding $f$-divergence can be defined: $D_f(q \| m) = \int m(\yv) f(\tfrac{q(\yv)}{m(\yv)})\diff \yv$ \citep{NowozinCT2016}, which also admits an variational form
\begin{align}
	D_f(q\|m) \;=\; \max\nolimits_{g}\big\{ \mathbb E_q[g(Y)] - \mathbb E_m[f^*(g(Y))]\big\},
\end{align}
where $f^*$ is the convex conjugate of $f$. When $f(x) = x \ln x$, $D_f$ reduces to the KL divergence. As a result, with this choice of $f$, the original KLAP problem \eqref{eq:org_problem_setting} can be rewritten as 
\begin{align}
\min\nolimits_{p} D_f(q \,\|\, \Tc_r p)
= \min\nolimits_{p} \max\nolimits_{g}\big\{ \mathbb E_q[g(Y)]
  - \mathbb E_{\Tc_r p}\big[f^\ast (g(Y))\big]\big\}.
\end{align}
This min-max formulation can be naturally implemented in the standard GAN framework \citep{Goodfellow2016}, with $g$ as the discriminator and $p$ parameterized by the generator. \citet{BoraPD2018} showed that this setup can recover $p_{\text{data}}$ when $\Tc_r$ is injective and the corruption process is differentiable with respect to the clean inputs.

To the best of our knowledge, existing KLAP-based frameworks cannot directly incorporate the additional identifiability term or support a more scalable, diffusion/bridge-style generator. In \cref{sec:one_sided_EOT_formation}, we introduce an alternative variational formulation that yields an alternating-minimization algorithm (\cref{sec:SFBD-OMNI}) addressing both issues. Notably, the method requires only black-box access to the corruption process, without any differentiability assumptions.
\vspace{-0.25em}
\subsection{One-sided Entropic Optimal Transport Formulation}
\vspace{-0.25em}
\label{sec:one_sided_EOT_formation}
Let $f_\yv(\xv) = \log r(\yv \mid \xv)$.\footnote{We assume $r( \cdot \mid \xv)$ has full support; this can be enforced by injecting an infinitesimal Gaussian noise to $\yv$.} Rather than invoking the variational representation of KL-divergence, we apply the Donsker-Varadhan variational principle \citep{DonskerV1983}:
\begin{align}
\log \Eb_{\xv\sim p}\big[e^{f_\yv(\xv)}\big] = \max\nolimits_{u_\yv} \, \Eb_{\xv\sim u_\yv}[f_\yv(\xv)] - \KL{u_\yv}{p},\label{eq:donsker_varadhan}
\end{align}
where $u_\yv$ denotes a distribution of $\xv$ given $\yv$.  Taking expectation over $\yv \sim q$ and rearranging yields
\begin{align}
\KL{q}{\Tc_r p}
=  \min\nolimits_{u_\yv} \Eb_{\yv\sim q} \left[\KL{u_\yv}{p} - \Eb_{\xv\sim u_\yv}[f_\yv(\xv)] \right]+ C,
\label{eq:LKAP_DV_form}
\end{align}
where $C$ collects the terms independent of $p$ (see \cref{appx:one_sided_OT} for the derivation). As a result, the augmented KLAP problem \eqref{eq:aug_problem_setting} is equivalent to
\begin{align}
	\argmin\nolimits_{p}\min\nolimits_{u_\yv}\Fc_\lambda(p, u_\yv)\label{eq:one_side_EOT_intepretation}
\end{align}
with 
\[
\Fc_\lambda(p, u_\yv) \eqdef \Eb_{y\sim q}\big[ \KL{u_\yv}{p} - \Eb_{\xv\sim u_\yv}[f_\yv(\xv)] \big] + \lambda\KL{h}{p}. 
\]
This nested minimization suggests an alternative strategy for solving the (augmented) KLAP problem in \cref{sec:SFBD-OMNI}. We conclude this section by noting that this observation allows KLAP to be viewed as a variant of classical entropic OT, offering a new perspective to understand the KLAP problem. 
\begin{restatable}{proposition}{EOT}
\label{prop:EOT}
Define the cost function
\(
c(\xv,\yv) \coloneqq -\log r(\yv \mid \xv)
\). Problem \eqref{eq:one_side_EOT_intepretation} is equivalent to
\begin{align*}
\argmin\nolimits_p \Phi(p)+ \; \lambda \, \KL{h}{p} 
\end{align*}
with
\begin{align*}
	\Phi(p) \eqdef \min_{\pi \in \Pi_\yv(q)}  \iint c(\xv,\yv)\,\pi(\xv,\yv)\,\diff \xv \diff \yv
+ \KL{\pi}{p \otimes q}
\end{align*}
where $\Pi_\yv(q)$ denotes the set of joint distributions of $(\xv,\yv)$ with $\yv$-marginal fixed to $q$.  
Moreover, when $\lambda = 0$, the optimal solution $p^\ast$ coincides with the $\xv$-marginal of the corresponding minimizer $\pi^\ast$ in the inner problem.
\end{restatable}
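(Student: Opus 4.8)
The plan is to show that the inner minimization over conditional families $\{u_\yv\}$ inside $\Fc_\lambda(p,u_\yv)$ is exactly the entropic–OT inner problem defining $\Phi(p)$, and then take $\argmin$ over $p$. The bridge between the two is the disintegration correspondence: a family $\{u_\yv\}$ of conditional laws of $\xv$ given $\yv$, together with the fixed marginal $q$, determines a joint law $\pi(\xv,\yv)=u_\yv(\xv)\,q(\yv)\in\Pi_\yv(q)$; conversely any $\pi\in\Pi_\yv(q)$ disintegrates as $\pi(\xv,\yv)=\pi(\xv\mid\yv)\,q(\yv)$, so $u_\yv\coloneqq\pi(\cdot\mid\yv)$ recovers the family ($q$-a.e. in $\yv$). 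Under this bijection I would rewrite the two pieces of $\Eb_{\yv\sim q}[\KL{u_\yv}{p}-\Eb_{\xv\sim u_\yv}[f_\yv(\xv)]]$: first, $\Eb_{\yv\sim q}\Eb_{\xv\sim u_\yv}[f_\yv(\xv)]=\iint f_\yv(\xv)\,\pi(\xv,\yv)\,\diff\xv\diff\yv=-\iint c(\xv,\yv)\,\pi(\xv,\yv)\,\diff\xv\diff\yv$ by $c=-\log r$; second, the chain rule for relative entropy gives $\KL{\pi}{p\otimes q}=\KL{q}{q}+\Eb_{\yv\sim q}[\KL{u_\yv}{p}]=\Eb_{\yv\sim q}[\KL{u_\yv}{p}]$, using that the $\yv$-marginal of $\pi$ equals $q$. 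Adding these, the $\Fc_\lambda$ integrand coincides, for each corresponding pair, with $\iint c\,\pi+\KL{\pi}{p\otimes q}$; since the correspondence is a bijection between admissible $\{u_\yv\}$ and $\Pi_\yv(q)$, we get $\min_{u_\yv}\Fc_\lambda(p,u_\yv)=\Phi(p)+\lambda\KL{h}{p}$, and minimizing over $p$ shows problem \eqref{eq:one_side_EOT_intepretation} is equivalent to $\argmin_p\Phi(p)+\lambda\KL{h}{p}$.

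For the ``moreover'' claim ($\lambda=0$), I would apply the chain rule a second time. Writing $\pi_\xv$ for the $\xv$-marginal of $\pi$, one has $\KL{\pi}{p\otimes q}=\KL{\pi}{\pi_\xv\otimes q}+\KL{\pi_\xv}{p}$, directly, since $\log\frac{\pi}{pq}=\log\frac{\pi}{\pi_\xv q}+\log\frac{\pi_\xv}{p}$ and integrating the last term against $\pi$ reduces to integrating against $\pi_\xv$. Hence for any fixed $\pi$ the map $p\mapsto\KL{\pi}{p\otimes q}$ is minimized exactly at $p=\pi_\xv$, strictly so for $p\neq\pi_\xv$ because then $\KL{\pi_\xv}{p}>0$. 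Now let $p^\ast\in\argmin\Phi$ and let $\pi^\ast\in\Pi_\yv(q)$ attain $\Phi(p^\ast)$. If $p^\ast\neq\pi^\ast_\xv$, then taking $\tilde p\coloneqq\pi^\ast_\xv$ yields $\Phi(\tilde p)\le\iint c\,\pi^\ast+\KL{\pi^\ast}{\pi^\ast_\xv\otimes q}<\iint c\,\pi^\ast+\KL{\pi^\ast}{p^\ast\otimes q}=\Phi(p^\ast)$, contradicting optimality of $p^\ast$; therefore $p^\ast=\pi^\ast_\xv$, the $\xv$-marginal of $\pi^\ast$ (and this is exactly the reading of the statement even when $\argmin\Phi$ is not a singleton).

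I expect the main obstacle to be technical rather than conceptual: justifying the disintegration rigorously (existence of regular conditional distributions of $\xv$ given $\yv$, and that minimizing over all such families is the same as minimizing over $\Pi_\yv(q)$), and checking that every relative entropy in sight is well defined. Here the excerpt's full-support assumption on $r(\cdot\mid\xv)$ is precisely what is needed: it ensures $c=-\log r$ is finite and that $p\otimes q$ dominates $\pi$ so the chain-rule decompositions hold; one should also dispatch the trivial case where $\KL{q}{\Tc_r p}=+\infty$, in which both sides are $+\infty$. A minor additional point is ensuring existence of the inner minimizer $\pi^\ast$ (lower semicontinuity of the strictly convex entropic-OT functional on the tight set $\Pi_\yv(q)$), which is what makes ``the corresponding minimizer'' in the statement meaningful.
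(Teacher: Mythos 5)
Your proposal is correct. The first half (the equivalence) is essentially the paper's own argument: you identify families $\{u_\yv\}$ with couplings $\pi(\xv,\yv)=u_\yv(\xv)\,q(\yv)\in\Pi_\yv(q)$, note $\Eb_{\yv\sim q}[\KL{u_\yv}{p}]=\KL{\pi}{p\otimes q}$ and $-\Eb_{\yv\sim q}\Eb_{u_\yv}[f_\yv(\xv)]=\iint c\,\pi$, which is exactly the paper's computation. Where you genuinely diverge is the ``moreover'' claim for $\lambda=0$. The paper argues via realizability: since $q=\Tc_r p_{\text{data}}$ is attainable, any minimizer satisfies $p^\ast\in\Sc(q)$, and its Lemma on realizable couplings exhibits the inner minimizer in Gibbs form, $\pi^\ast(\xv\mid\yv)\propto p^\ast(\xv)\,r(\yv\mid\xv)$, which under $\Tc_r p^\ast=q$ is the posterior $p^\ast(\xv\mid\yv)$, so the $\xv$-marginal is $p^\ast$ by direct integration. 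You instead use the chain-rule identity $\KL{\pi}{p\otimes q}=\KL{\pi}{\pi_\xv\otimes q}+\KL{\pi_\xv}{p}$ and an exchange/contradiction argument: if $p^\ast\neq\pi^\ast_\xv$, replacing $p^\ast$ by $\pi^\ast_\xv$ strictly decreases $\Phi$. Both are sound. The paper's route buys the explicit optimal coupling — the posterior coupling that the SFBD-OMNI iteration actually samples — and makes existence of $\pi^\ast$ a non-issue since the minimizer is written in closed form; your route is more self-contained (no appeal to $p^\ast\in\Sc(q)$ nor to the explicit Gibbs formula, so it would survive even without exact realizability of $q$), at the cost of the technical obligations you correctly flag: existence of the inner minimizer $\pi^\ast$ (lower semicontinuity plus tightness of $\Pi_\yv(q)$) and finiteness of the entropies involved, for which the full-support assumption on $r(\cdot\mid\xv)$ is the key ingredient.
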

Notably, $\Phi(p)$ in \cref{prop:EOT} coincides with the entropic OT objective \citep{Cuturi2013}, but with constraints imposed only on the $\yv$-marginal rather than on both marginals. In particular, when the cost function is quadratic, as in the case of a Gaussian corruption kernel, the optimal coupling $\pi^\ast$ corresponds to the Schrödinger Bridge \citep{Leonard2014}. Moreover, \cref{prop:EOT} shows that in the absence of the regularization toward the prior distribution $h$  (i.e., when $\lambda=0$), the optimal solution $p^\ast$ induces an optimal coupling $\pi^\ast$ in the inner entropic OT problem whose marginals are precisely $p^\ast$ and $q$. This interpretation suggests that solving KLAP amounts to finding a distribution $p$ that minimizes the transportation cost induced by the corruption kernel, subject to entropy regularization. The methods introduced in \cref{sec:SFBD-OMNI} provide an effective approach for solving this one-sided entropic OT problem.

\section{Stochastic Forward-Backward Deconvolution-OMNI}
\label{sec:SFBD-OMNI}
The variational formulation of the augmented KLAP in \cref{eq:one_side_EOT_intepretation} suggests an alternative minimization approach for finding the minimizer $p^\ast_\lambda$ defined in \cref{eq:aug_problem_setting}. 
This leads to an algorithm that generalizes SFBD \citep{LuWY2025} to arbitrary corruption models, which we call \emph{SFBD-OMNI}.

\textbf{SFBD-OMNI.} Starting from an arbitrary initialization $p^0(\xv)$, we minimize $\Fc(p, u_\yv)$ in \eqref{eq:one_side_EOT_intepretation} by alternating updates over $p$ and $u_\yv$, holding the other fixed. Specifically, at each iteration, we compute
\begin{align}
u_\yv^{k} &= \argmin\nolimits_{u_\yv}\; \Fc_\lambda(p^k, u_\yv),
\quad\quad
p^{k + 1} = \argmin\nolimits_{p}\; \Fc_\lambda(p, u^{k}_\yv),
\end{align}
where both subproblems admit closed-form solutions:
\begin{align}
u_\yv^k(x) &= \frac{p^k(x) \, r(\yv \mid \xv)}{\Tc_r p^k(\yv)},
\quad\quad
p^{k+1}(x) = \frac{1}{1+\lambda}\, \tilde p^{k+1}(x) + \frac{\lambda}{1+\lambda}\, h(x), \label{eq:SFBD-OMNI-upd-rule}
\end{align}
with $\tilde p^{k+1}(\xv) = \int q(\yv)\, u_\yv^k(\xv)\, \diff \yv$. 


Note that $u_\yv^k$ is the posterior distribution of $p^k(x)$ under the joint distribution $\pi(\xv,\yv) = p^k(\xv)\, r(\yv \vert \xv)$. As described in \cref{sec:prel}, by introducing a transition process connecting $\xv$ and $\yv$, we can leverage a bridge model to learn this posterior in a manner analogous to diffusion models, by minimizing the corresponding CDM loss $\Lc_\text{CDM}$. Let $u_{\thetav}$ denote the learnt posterior distribution. The quantity $\tilde p_k$ is then approximated using samples from $u_{\thetav}(\cdot \mid \yv)$ with $\yv \sim q(\yv)$.

We describe the implementation of SFBD-OMNI in \cref{alg:SFBD_OMNI}, assuming access to a small set of clean samples that define the prior $h$, denoted $h_\text{clean}$, which also serves as the initialization $p^0$. During training, $\tilde p^k$ is approximated by $p_\Ec$ and updated iteratively, while the mixture of $p_\Ec$ and $h_{\Ec_\text{clean}}$ is realized through a weighted sampler.

\setlength{\textfloatsep}{1em}     
\SetAlgoNlRelativeSize{0}          

\begin{figure*}[t]
\centering
\begin{minipage}{0.48\linewidth}
\begin{algorithm}[H]
\DontPrintSemicolon
   \caption{SFBD-OMNI}
   \label{alg:SFBD_OMNI}

    \KwIn{clean data $\Ec_\text{clean} = \{\xv^{(i)}\}_{i=1}^M$, noisy data $\Ec_\text{noisy} = \{\yv^{(i)}\}_{i=1}^N$, CDM loss $\Lc_\text{CDM}$} 
    
    \vspace{1em}

    \tcp{Pretrain using clean samples}
    
    $\thetav \leftarrow$ Minimizing $\Lc_\text{CDM}\big({\thetav}, h_{\Ec_\text{clean}}\big(\xv) \, r(\yv\vert \xv) \big)$
       
    $\Ec \leftarrow \{ \xv^{(i)}:$ take one sample from $u_{\thetav}(\xv|\yv)$
    for each corrupted sample $y \in \Ec_\text{noisy}$ \}. 
    
    \vspace{0.3em}
    
    \tcp{Iteratively optimize with corrupted samples}
    
    \vspace{1em}
        
    \For{$k = 1,2,\ldots, K$}{       
       $\thetav \leftarrow$ Minimizing $\Lc_\text{CDM}\big({\thetav}, p(\xv)\, r(\yv\vert \xv) \big)$ with $p = \frac{1}{1 + \lambda} p_\Ec + \frac{\lambda}{1 + \lambda} h_{\Ec_\text{clean}}$.
        
        \vspace{0.3em}
        
        $\Ec \leftarrow \{ \xv^{(i)}:$ take one sample from $u_{\thetav}(\xv|\yv)$ for each corrupted sample $\yv \in \Ec_\text{noisy}$ \}
    }
    \vspace{1.2em}
    \KwOut{Final $u_{\thetav}$}
\end{algorithm}
\end{minipage}
\hfill
\setcounter{AlgoLine}{0}
\begin{minipage}{0.48\linewidth}
\begin{algorithm}[H]
\DontPrintSemicolon
   \caption{Online SFBD-OMNI}
   \label{alg:Online_SFBD_OMNI}

    \KwIn{clean data $\Ec_\text{clean} = \{\xv^{(i)}\}_{i=1}^M$, noisy data $\Ec_\text{noisy} = \{\yv^{(i)}\}_{i=1}^N$, gradient steps $m$, CDM loss $\Lc_\text{CDM}$} 

	\tcp{Pretrain using clean samples}    
    $\thetav \leftarrow$ Minimizing $\Lc_\text{CDM}\big({\thetav}, h_{\Ec_\text{clean}}\big(\xv) \, r(\yv\vert \xv) \big)$
   
    $\Ec \leftarrow \{ \xv^{(i)}:$ take one sample from $u_{\thetav}(\xv|\yv)$
    for each corrupted sample $\yv \in \Ec_\text{noisy}$ \}. 
    
    \vspace{0.3em}
    
    \tcp{Iteratively optimize with corrupted samples (online updates)}
    
    \For{$k = 1,2,\ldots, K$}{       
        $\thetav \leftarrow$ Minimizing $\Lc_\text{CDM}\big({\thetav}, p(\xv)\, r(\yv\vert \xv) \big)$ with $p = \frac{1}{1 + \lambda} p_\Ec + \frac{\lambda}{1 + \lambda} h_{\Ec_\text{clean}}$.        
        \vspace{0.3em}
        
        $\Ec \leftarrow$ \{Replace ratio $\gamma$ of samples in $\Ec$ with the new ones by sampling $\xv$ from $u_{\thetav}(\xv|\yv)$ for $\yv$ drawn from $\Ec_\text{noisy}$\}
    }
    \KwOut{Final $u_{\thetav}$}
\end{algorithm}
\end{minipage}
\end{figure*}

\textbf{Online SFBD-OMNI.} The implementation of \cref{alg:SFBD_OMNI} alternates between training and sampling, which in practice demands considerable manual intervention. Moreover, because $\Ec$ changes drastically at each iteration, optimizers such as Adam \citep{KingmaBa2014} must be reset after every fine-tuning step; otherwise, stale momentum can trigger a sharp and irreversible increase in training loss. To guarantee convergence in each iteration, the network must also be optimized for a sufficiently large number of steps. However, this can lead to overfitting on the current iterate $p^k$, making subsequent adaptation to new targets more difficult.

To address these challenges, we introduce an online variant in \cref{alg:Online_SFBD_OMNI}, where a fraction $\gamma$ of the reconstructed set $\Ec$ is refreshed at each iteration. This corresponds to updating $\tilde p^{k+1}(\xv)$ in \cref{eq:SFBD-OMNI-upd-rule} as
\begin{align}
	\tilde p^{k+1}(\xv) = \gamma \int q(\yv)\, u_\yv^k(\xv)\, \diff \yv + (1-\gamma) \, \tilde p^{k}(\xv) \quad \text{with \quad $\tilde p^{0}(\xv) = \int q(\yv)\, u_\yv^0(\xv)\, \diff \yv$.} \label{eq:soft_upd_reconstructed_set}
\end{align}
When $\gamma=1$, the algorithm reduces to the standard SFBD-OMNI. 
Because $\Ec$ changes only slightly after each update, we can continue optimizing $u_{\thetav}$ for additional gradient steps without resetting the optimizer state, allowing it to adapt smoothly to the new minimum. 
This strategy reduces manual intervention and accelerates convergence. 
In \cref{prop:convergence}, we show that this ``lazy'' update scheme still guarantees convergence to the optimum. 
Since the result covers the case $\gamma=1$, it also establishes the convergence of SFBD-OMNI.
\begin{restatable}[Convergence to the optimum]{proposition}{convergence}
\label{prop:convergence}
Let the distribution sequences $\{u_y^k\}$ and $\{p^k\}$ evolve according to \cref{eq:SFBD-OMNI-upd-rule}, with $\tilde p^k$ updated by \cref{eq:soft_upd_reconstructed_set}. Starting from an arbitrary initialization $p^0$ and for $\gamma \in (0,1]$, under mild assumptions, we have $p^k \to p_\lambda^\ast$ as $k \to \infty$. Moreover, when $\lambda \to 0$, we have
\begin{align}
	\lim_{k \to \infty} p^k = h^\dagger,  \quad \KL{h^\dagger}{ p^{k+1}} \leq \KL{h^\dagger}{p^k}.  
\end{align}
In addition, the following bounds hold:
\begin{align}
	\min_{1 \leq k \leq K} \KL{q}{\Tc_r p^k} 
	&\;\leq\; \frac{\KL{h^\dagger}{p^0}}{\gamma K}, \label{eq:conv_rate_corrupted_pk}
\end{align}
where $K$ denotes the total number of iterations and $q = \Tc_r p_\text{data}$. 
\end{restatable}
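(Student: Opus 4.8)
I would treat \cref{eq:SFBD-OMNI-upd-rule}--\cref{eq:soft_upd_reconstructed_set} as an alternating‑minimization (EM/Sinkhorn‑type) scheme for $\Fc_\lambda$ and run two coupled arguments. The first step is to rewrite the lazy update compactly: substituting $\tilde p^k = (1+\lambda)p^k - \lambda h$ into \cref{eq:soft_upd_reconstructed_set} and using \cref{eq:SFBD-OMNI-upd-rule} gives
\[
p^{k+1} = (1-\gamma)\,p^k + \gamma\,\hat p^{k+1}, \qquad \hat p^{k+1} \eqdef \tfrac{1}{1+\lambda}\,\bar u^k + \tfrac{\lambda}{1+\lambda}\,h,
\]
where $\hat p^{k+1}$ is the exact ($\gamma{=}1$) M‑step iterate and $\bar u^k(\xv) = \int q(\yv)\,u^k_\yv(\xv)\,\diff\yv = p^k(\xv)\,\rho^k(\xv)$ with $\rho^k(\xv) \eqdef \int \tfrac{q(\yv)\,r(\yv\mid\xv)}{\Tc_r p^k(\yv)}\,\diff\yv$. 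Thus every iterate is a convex combination of $p^k$ and $\hat p^{k+1}$, and in turn $\hat p^{k+1}/p^k$ is a convex combination of $\rho^k$ and $h/p^k$.

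\textbf{The Lyapunov inequality.} Fix any reference $p^\dagger$ with $\Tc_r p^\dagger = q$ (in particular $p^\dagger = h^\dagger \in \Sc(q)$, and also $p^\dagger = p_\text{data}$, qualify). Applying concavity of $\log$ twice --- to $p^{k+1}/p^k = (1-\gamma)\cdot 1 + \gamma\,(\hat p^{k+1}/p^k)$ and then to $\hat p^{k+1}/p^k$ --- integrating against $p^\dagger$, and using $\int p^\dagger\log(p^{k+1}/p^k) = \KL{p^\dagger}{p^k} - \KL{p^\dagger}{p^{k+1}}$ and $\int p^\dagger\log(h/p^k) = \KL{p^\dagger}{p^k} - \KL{p^\dagger}{h}$ gives
\[
\KL{p^\dagger}{p^k} - \KL{p^\dagger}{p^{k+1}} \;\ge\; \tfrac{\gamma}{1+\lambda}\!\int p^\dagger\log\rho^k \;+\; \tfrac{\gamma\lambda}{1+\lambda}\big(\KL{p^\dagger}{p^k} - \KL{p^\dagger}{h}\big).
\]
Next, Jensen's inequality on $\rho^k(\xv) = \Eb_{\yv\sim r(\cdot\mid\xv)}[\,q(\yv)/\Tc_r p^k(\yv)\,]$, followed by Fubini and $\int p^\dagger(\xv)\,r(\yv\mid\xv)\,\diff\xv = \Tc_r p^\dagger(\yv) = q(\yv)$, yields $\int p^\dagger\log\rho^k \ge \KL{q}{\Tc_r p^k}$. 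In the regularization‑free regime $\lambda\to 0$ with $p^\dagger = h^\dagger$ this collapses to $\KL{h^\dagger}{p^k} - \KL{h^\dagger}{p^{k+1}} \ge \gamma\,\KL{q}{\Tc_r p^k} \ge 0$, which is exactly the claimed monotonicity; telescoping over $k = 1,\dots,K$, discarding the nonnegative tail $\KL{h^\dagger}{p^{K+1}}$, and using the $k{=}0$ instance to replace $\KL{h^\dagger}{p^1}$ by $\KL{h^\dagger}{p^0}$ delivers \eqref{eq:conv_rate_corrupted_pk}.

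\textbf{Convergence to $p^\ast_\lambda$.} For the first claim ($\lambda>0$) I would switch to the objective $\phi(p) \eqdef \min_u \Fc_\lambda(p,u) = \Jc_\lambda(p) + \mathrm{const}$ (this is \eqref{eq:LKAP_DV_form}), which is strictly convex by \cref{prop:identifiability}. Since the E‑step gives $\phi(p^k) = \Fc_\lambda(p^k,u^k_\yv)$, convexity of $p\mapsto\Fc_\lambda(p,u^k_\yv)$, the M‑step Pythagorean identity $\Fc_\lambda(p,u^k_\yv) = \Fc_\lambda(\hat p^{k+1},u^k_\yv) + (1+\lambda)\KL{\hat p^{k+1}}{p}$, and convexity of $\KL{\hat p^{k+1}}{\cdot}$ along $p^{k+1} = (1-\gamma)p^k + \gamma\hat p^{k+1}$ together yield $\phi(p^k) - \phi(p^{k+1}) \ge (1+\lambda)\gamma\,\KL{\hat p^{k+1}}{p^k} \ge 0$. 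Hence $\phi(p^k)$ is nonincreasing and $\sum_k\KL{\hat p^{k+1}}{p^k}<\infty$, so the iterates asymptotically stop moving; under the mild assumptions (full support of $r$ via the footnote's noise injection, continuity of $r$ in $\xv$ as in \cref{prop:identifiability}, and tightness/compact KL‑sublevel sets so Prohorov applies) any subsequential weak limit is a fixed point of the alternating map, hence a critical point of $\Jc_\lambda$, hence the unique minimizer $p^\ast_\lambda$ by strict convexity --- so the whole sequence converges. The $\lambda\to 0$ statement $p^k\to h^\dagger$ then follows by combining this with \cref{prop:limitingLambdaCase} ($p^\ast_\lambda\to h^\dagger$), or directly from the Lyapunov monotonicity above plus the standard alternating‑projection characterization of the information projection of $h$ onto $\Sc(q)$.

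\textbf{Main obstacle.} The Lyapunov inequality and the monotonicity claims are essentially mechanical once the convex‑combination form of the update is spotted, and the $1/(\gamma K)$ rate is an immediate telescoping. The genuinely delicate step is the last one: upgrading ``$\phi(p^k)$ decreases to its minimum and the iterates stop moving'' to actual convergence of $p^k$ requires the compactness and lower‑semicontinuity input packaged in the ``mild assumptions'' (so that subsequential limits exist and are fixed points), and for $\lambda\to0$ one must additionally certify that the limit is the specific element $h^\dagger$ of the non‑singleton solution set $\Sc(q)$ rather than some other minimizer --- which is exactly what the interchange of limits with \cref{prop:limitingLambdaCase}, or an explicit information‑projection argument, supplies.
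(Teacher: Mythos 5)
Your proposal is correct and takes essentially the same route as the paper: you recognize the lazy update as a convex combination (equivalently a KL-proximal step), show the objective $\Jc_\lambda$ decreases monotonically so cluster points are fixed points of the alternating map, invoke strict convexity plus the fixed-point/stationarity identification (which is exactly what the paper's \cref{lem:KKTVsStationary} supplies and which you assert rather than verify), and obtain the $\lambda\to 0$ limit via \cref{prop:limitingLambdaCase}. Your Lyapunov inequality $\KL{h^\dagger}{p^k}-\KL{h^\dagger}{p^{k+1}}\ \geq\ \gamma\,\KL{q}{\Tc_r p^k}$, derived by two applications of Jensen's inequality on $\log$, is mathematically equivalent to the paper's derivation via the KL chain rule (disintegration) plus convexity of the KL divergence, and the telescoping to \eqref{eq:conv_rate_corrupted_pk} matches the paper's argument.
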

While \cref{eq:conv_rate_corrupted_pk} may suggest that a smaller $\gamma$ leads to slower convergence, note that with smaller $\gamma$, the set $\Ec$ is only partially updated, so the network $u_{\thetav}$ requires fewer steps to converge. Thus, although a larger $K$ may be needed to guarantee convergence, each step is cheaper, and the total training time does not necessarily increase. In practice, since the optimizer does not need to reset, training time can even decrease.\\[0.5em]
\textbf{Comparison to existing methods.} When $\lambda = 0$ and the corruption process is Gaussian noise injection, with the posterior modeled via the backward SDE in \cref{sec:prel}, our framework reduces to SFBD \citep{LuWY2025}. In EMDiffusion, \citet{BaiWCS2024} heuristically derive an iterative rule that coincides with SFBD-OMNI’s update in \cref{eq:SFBD-OMNI-upd-rule} when $\lambda=0$; our work establishes convergence of this rule to the optimal solution, which EMDiffusion does not, and further extends it with an online formulation and the ability to handle non-identifiable corruption processes. Unlike AmbientGAN \citep{BoraPD2018}, which requires differentiating noisy samples with respect to clean ones and cannot address non-identifiable corruption processes, SFBD-OMNI and the online version assume black-box access to the corruption process, avoid adversarial training, and thus sidestep common issues such as gradient vanishing \citep{GoodfellowPMXWOCB2014, MiyatoKKY2018, FedusRLDMG2018} and mode collapse \citep{Goodfellow2016, ArjovskyB2017, MeschederGeigerN2018}. 

\vspace{-0.6em}

\section{Empirical Study}
\label{sec:emp}
\vspace{-0.5em}
In this section, we evaluate the proposed SFBD-OMNI framework introduced in \cref{sec:SFBD-OMNI}. Across diverse benchmark settings, both SFBD-OMNI and its online variant demonstrate superior performance over existing approaches for recovering the original data distribution from corrupted observations. Furthermore, our ablation studies show that the method can effectively address non-identifiable corruption processes.

\textbf{Datasets and evaluation metrics.} Our experiments are performed on CIFAR-10 \citep{Krizhevsky2009} and CelebA \citep{LiuGL2022}, with image sizes of $32 \times 32$ and $64 \times 64$, respectively. CIFAR-10 contains 50,000 training samples and 10,000 test samples spanning 10 object categories. CelebA is a large-scale dataset of human faces with a standard split of 162,770 training, 19,867 validation, and 19,962 test images. For CelebA, preprocessing follows the official tool released with DDIM \citep{SongME2021}.\\[0.5em]
\textbf{Models and other configurations.} 
In our implementation, we parameterize $u_{\thetav}(\xv \mid \yv)$ with a flow-matching model \citep{LipmanCBNL2023} and apply small endpoint perturbations to $\yv$ to avoid degeneracy, as described in \cref{appx:gaussian_noise_reg_deter_sampling}. 
\modify{We adopt flow matching because it converges faster and has a lower-variance training objective than diffusion-based models, while achieving comparable or even superior sample quality. This computational efficiency is particularly important in our framework, where the bridge models are trained repeatedly against a moving target distribution.}
To further mitigate overfitting, we adopt the non-leaky augmentation technique \citep{KarrasAAL22}. For the classical SFBD-OMNI, after pretraining on a small set of clean samples, we set the clean-sample weight $\tfrac{\lambda}{1 + \lambda}$ to zero when the corruption process satisfies the identifiability condition; otherwise, we use $\tfrac{\lambda}{1 + \lambda} = 0.2$, unless specified otherwise. For the flow variant, we fix $\tfrac{\lambda}{1 + \lambda} = 0.2$, as this setting yields more stable training. In addition, unless noted, we set the noisy-set update ratio to $\gamma = 0.002$ and perform the update at the end of each training epoch.
 For sampling, we generate samples by first picking $\yv$ from the noisy dataset and then sampling from the final $\uv_{\thetav}(\xv\mid\yv)$. Additional training configurations are provided in \cref{appx:expConfig}. We evaluate image quality using the Frechet Inception Distance (FID), computed between the reference dataset and 50,000 images generated by the models.\\[0.5em]
\renewcommand{\arraystretch}{1.2} 
\begin{table}[t]
\centering
\resizebox{0.78\linewidth}{!}{%
\begin{tabular}{@{}lccc ccc@{}}
\toprule
\multirow{2}{*}{\textbf{Method}} & \multicolumn{3}{c}{\textbf{CIFAR-10}} & \multicolumn{2}{c}{\textbf{CelebA}} \\ \cmidrule(lr){2-4} \cmidrule(lr){5-6}
 & Pixel Masking (\cmark) & Additive Gauss. (\cmark) & Grayscale (\xmark) & Gauss. Blur (\xmark) & Grayscale (\xmark) \\ \midrule
Noise2Self~\citep{BatsonR2019}       &   --   & 92.06  &   --    &   --   &   --   \\
SURE-Score~\citep{AaliAKT2023}       & 220.01 & 132.61 &   \modify{109.04}    & 191.96 &   \modify{219.81}   \\
AmbientDiff~\citep{DarasA2023}  &  28.88 &   --   &   --    &   --   &   --   \\
EMDiffusion~\citep{BaiWCS2024}       &  \textbf{21.08} & 86.47  & 115.11  & 91.89  & 59.04 \\
SFBD~\citep{LuWY2025}                &   --   & 13.53  &   --    &   --   &   --   \\
SFBD-OMNI (\textbf{ours})                           &  21.31 & \textbf{10.81}  & 32.61   &  11.60  &  11.85 \\
Online SFBD-OMNI (\textbf{ours})                     &  22.43 &  11.06  & \textbf{31.32}   &  \textbf{10.28}  &  \textbf{11.21} \\ \bottomrule
\end{tabular}%
}
\caption{FID scores across different corruption processes on CIFAR-10 and CelebA. Processes marked with \cmark\ satisfy the identifiability condition, while those marked with \xmark\ do not. Pixel masking is applied with probability $p = 0.6$ per pixel. Additive Gaussian corruption adds noise with $\sigma = 0.2$ to each clean sample. The grayscale process converts a color image into a single-channel grayscale image, while Gaussian blur is applied with a kernel size of nine and $\sigma = 2$. All methods, except Noise2Self, are pretrained on 50 clean images randomly sampled from the training dataset.}
\label{tb:performance_compare}
\vspace{-0.3em}
\end{table}
\textbf{Performance comparison.} 
In \cref{tb:performance_compare}, we compare SFBD-OMNI with representative models trained on noisy images corrupted by various processes. As discussed in \cref{sec:identifiability}, pixel masking and additive Gaussian noise satisfy the identifiability condition, making it theoretically possible to recover the data distribution using only noisy samples. In contrast, grayscale conversion and Gaussian blur do not satisfy this condition, meaning that additional prior information is required for effective distribution recovery. (Notably, Gaussian blur discards high-frequency components of an image and can be viewed as a projection in the Fourier domain.) 

For the baseline models, Noise2Self \citep{BatsonR2019} is a general-purpose denoising method trained with self-supervised techniques. SURE-Score \citep{AaliAKT2023} and EMDiffusion \citep{BaiWCS2024} address general inverse problems, leveraging Stein’s unbiased risk estimate and expectation–maximization, respectively. Notably, the update rules of EMDiffusion coincide with those of standard SFBD-OMNI when no additional prior information is incorporated, rendering it ineffective for non-identifiable corruption processes. AmbientDiff \citep{DarasA2023}, in contrast, is specifically designed to train diffusion models on images corrupted by masking. We also report results from the original SFBD, which is tailored to additive Gaussian noise~\citep{LuWY2025}. \modify{(A discussion and empirical comparison with a very recent work, Ambient Diffusion OMNI~\citep{DarasRKTD2025}, is provided in \cref{appx:amb_diff_omni_compare}.)} Following \citet{BaiWCS2024}, unless otherwise stated, all methods except Noise2Self are pretrained on 50 clean images randomly sampled from the training dataset. In SFBD-OMNI and the flow variant, these images are further used as prior information during sequential training whenever the clean-sample weight $\tfrac{\lambda}{1 + \lambda} > 0$. For all reported results, we consistently use the same set of 50 clean images.

As shown in \cref{tb:performance_compare}, apart from the pixel masking corruption process, SFBD-OMNI and its flow variant consistently outperform the baselines, achieving substantially better performance on the non-identifiable processes. In the pixel masking case, EMDiffusion reports a marginally lower FID than SFBD-OMNI; however, the difference is negligible, indicating that SFBD-OMNI performs on par with EMDiffusion in this setting. For the non-identifiable corruptions, we observe that incorporating prior information, by jointly training the model with reconstructed samples in $\Ec$ and clean samples, effectively guides the model toward the true data distribution, as reflected in the much lower FID scores. In addition, because the flow-variant implementation always assigns a non-zero weight $\tfrac{\lambda}{1 + \lambda}$ to clean samples for added stability, its optimal solution $p_\lambda^\ast$ deviates from the true data distribution in identifiable cases, leading to a slightly higher FID than classical SFBD-OMNI. In contrast, for the non-identifiable processes, this additional regularization is essential and applied in both variants. Consequently, the smooth updates and end-to-end training pipeline of the flow model provide it with an additional advantage, enabling it to achieve lower FID scores.\\[0.5em]
\begin{figure*}
	\centering
	\includegraphics[width=0.95\textwidth]{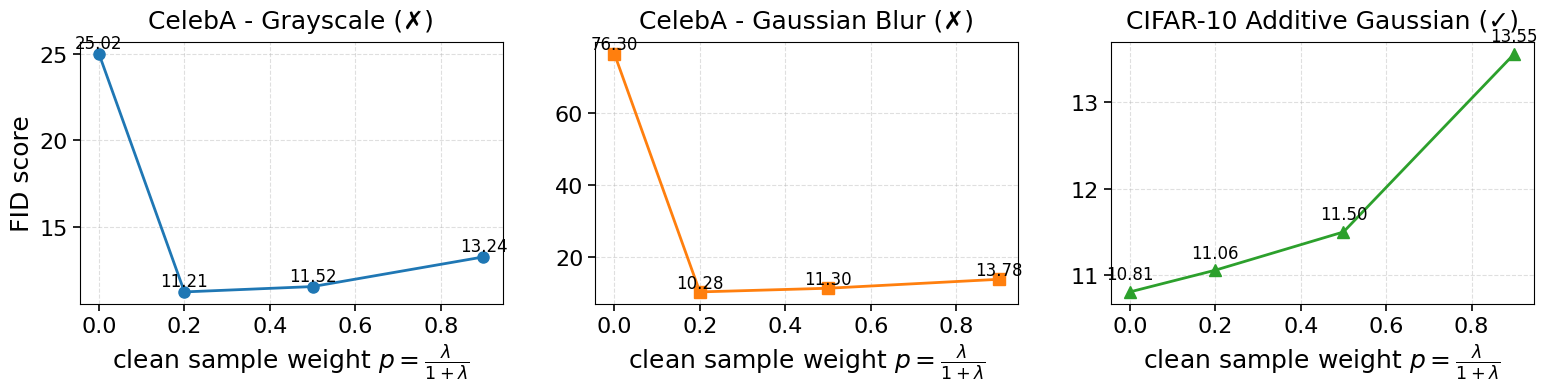}
	\vspace{-1em}
	\caption{FID scores of Online SFBD-OMNI under different clean sample weights $p = \tfrac{\lambda}{1+\lambda}$ across various corruption processes. Processes marked with \cmark\ satisfy the identifiability condition, while those marked with \xmark\ do not.}
	\label{fig:ablation}
	\vspace{-0.7em}
\end{figure*}
\textbf{Effect of the clean sample weights.} \modify{
To examine how SFBD-OMNI leverages clean samples to mitigate identifiability issues, \cref{fig:ablation} reports FID curves under varying clean-sample weights and corruption types (settings follow \cref{tb:performance_compare}). When identifiability does not hold, using clean samples as a soft prior constraint guides the model toward the correct distribution; however, overly large weights pull the solution away from the target, increasing FID. Conversely, when identifiability is satisfied, this regularization is unnecessary and may even degrade performance. This phenomenon corroborates our discussion in \cref{sec:amb_KL_proj} and \cref{sec:variational_perspective}. In particular, in identifiable setups, clean samples mainly help initialize $p_0$, after which training proceeds best without them (e.g., CIFAR-10 with Gaussian noise). When identifiability fails, clean samples must remain active ($\lambda>0$) to avoid convergence to an arbitrary element of $\Sc(q)$, as seen in CelebA with Grayscale and Gaussian Blur, where removing clean samples increases FID dramatically. These trends align directly with the theoretical role of identifiability. Since the clean samples are only used for initializing $p_0$ when the identifiability condition is satisfied, we show in \cref{appx:pretrain_similar_distribution} that it is acceptable to use samples from a similar distribution instead when clean samples are not available.  \\[0.5em]
\textbf{Effect of the number of clean samples.} \cref{fig:additional_ablation}a reports the FID scores of Online SFBD-OMNI on CelebA under Grayscale corruption for different amounts of clean data. Increasing the number of clean samples improves performance at both the pretraining and iterative optimization stages, though with diminishing returns. This is expected and aligned with our discussions in \cref{sec:SFBD-OMNI}: more clean samples make the empirical clean distribution $h_{\text{clean}}$ closer to $p_{\text{data}}$, yielding a better initialization $p^0 = h_{\text{clean}}$ and a limiting distribution $p^\star_\lambda$ (defined in \cref{eq:aug_problem_setting}) that more closely matches $p_{\text{data}}$. Once $h_{\text{clean}}$ is already a good approximation, however, additional samples provide only marginal benefit.\\[0.5em]}%
\modify{
\textbf{Effect of the update ratio $\gamma$.}
\cref{fig:additional_ablation}b shows the FID trajectories of both the running reconstructed sample set $\mathcal{E}$ and a newly generated sample set during the iterative optimization stage of Online SFBD-OMNI, evaluated under different reconstructed-sample update ratios $\gamma$. The experiment is conducted on CIFAR-10 with additive Gaussian corruption ($\sigma = 0.5$) and 2,000 clean samples. 
A larger $\gamma$ causes the reconstructed set $\Ec$ to be refreshed more frequently, which yields a sharper early decrease in FID (as seen for $\gamma = 0.5$). Yet, because $\Ec$ changes so rapidly, the model cannot fully adjust to the current reconstruction set before it is updated again. This instability appears as a growing discrepancy between the FIDs of reconstructed and newly generated samples after epoch~6, eventually degrading reconstruction quality and causing both FID curves to rise. In contrast, smaller $\gamma$ values make $\Ec$ evolve more gradually, giving the model enough time to optimize with respect to the current set. This leads to more stable training, delays degradation, and achieves lower overall FIDs. Hence, in practice, a relatively small $\gamma$ is generally preferable. \\[0.5em]
\begin{figure*}
	\centering
	\includegraphics[width=0.32\textwidth]{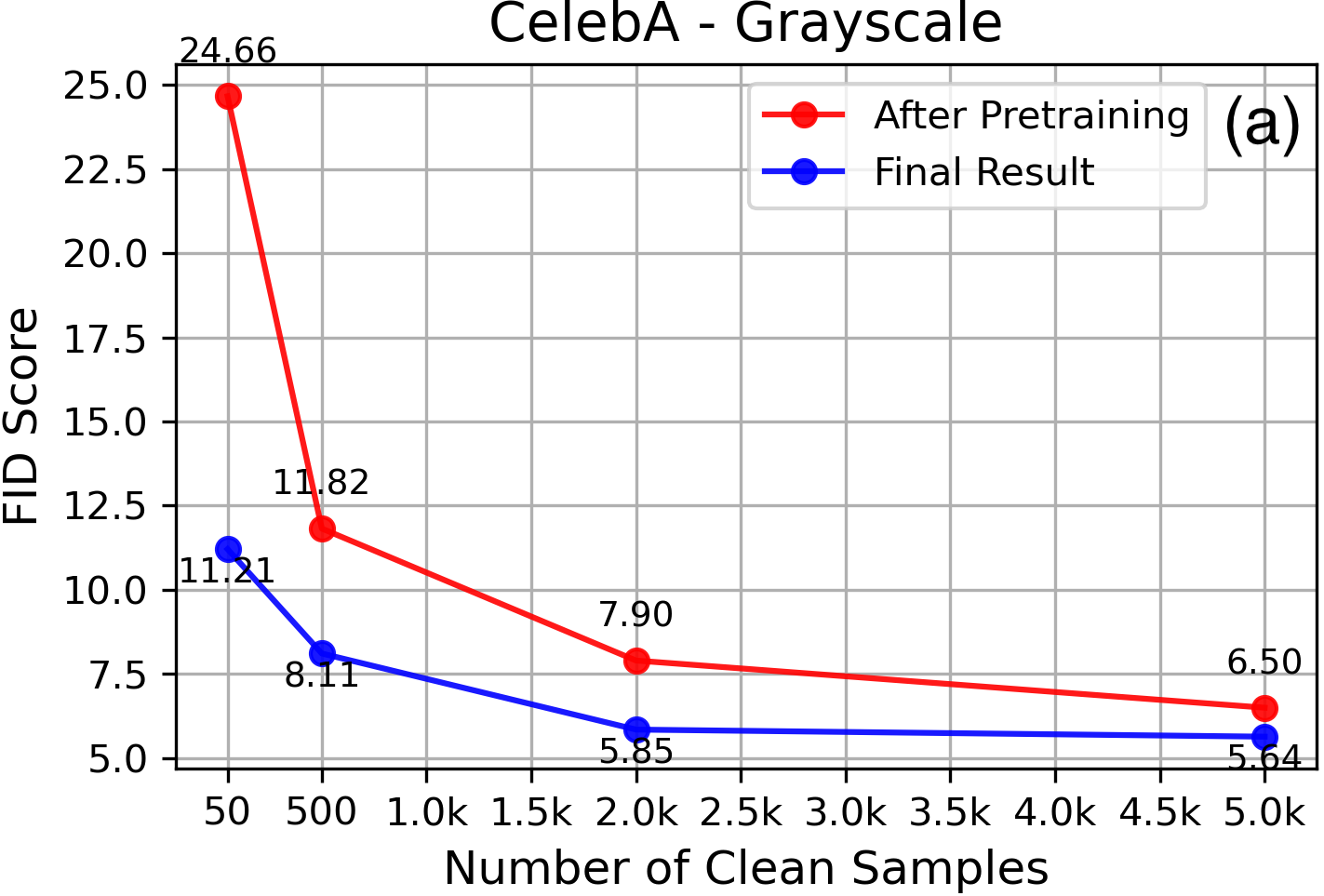}
	\includegraphics[width=0.32\textwidth]{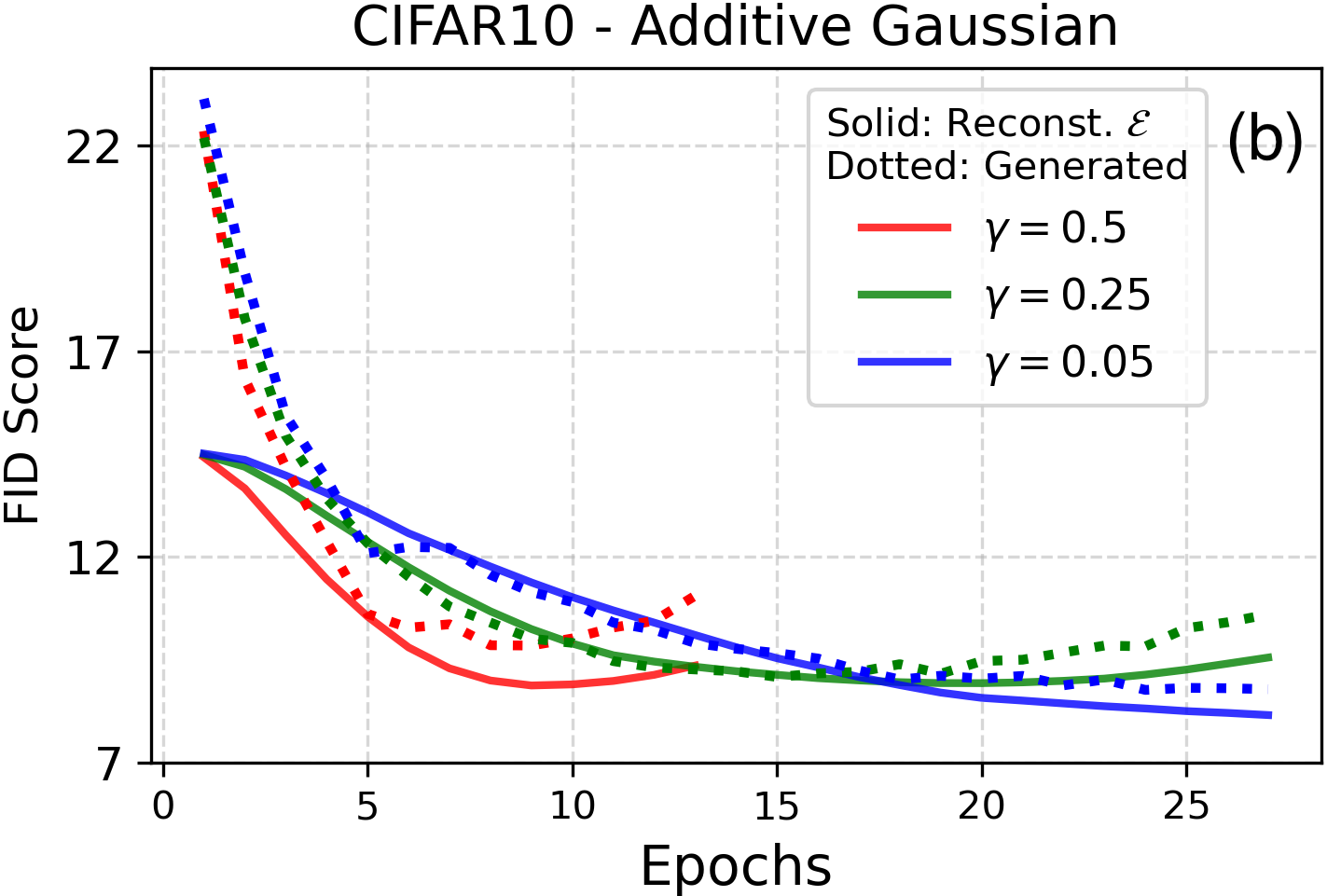}
	\includegraphics[width=0.32\textwidth]{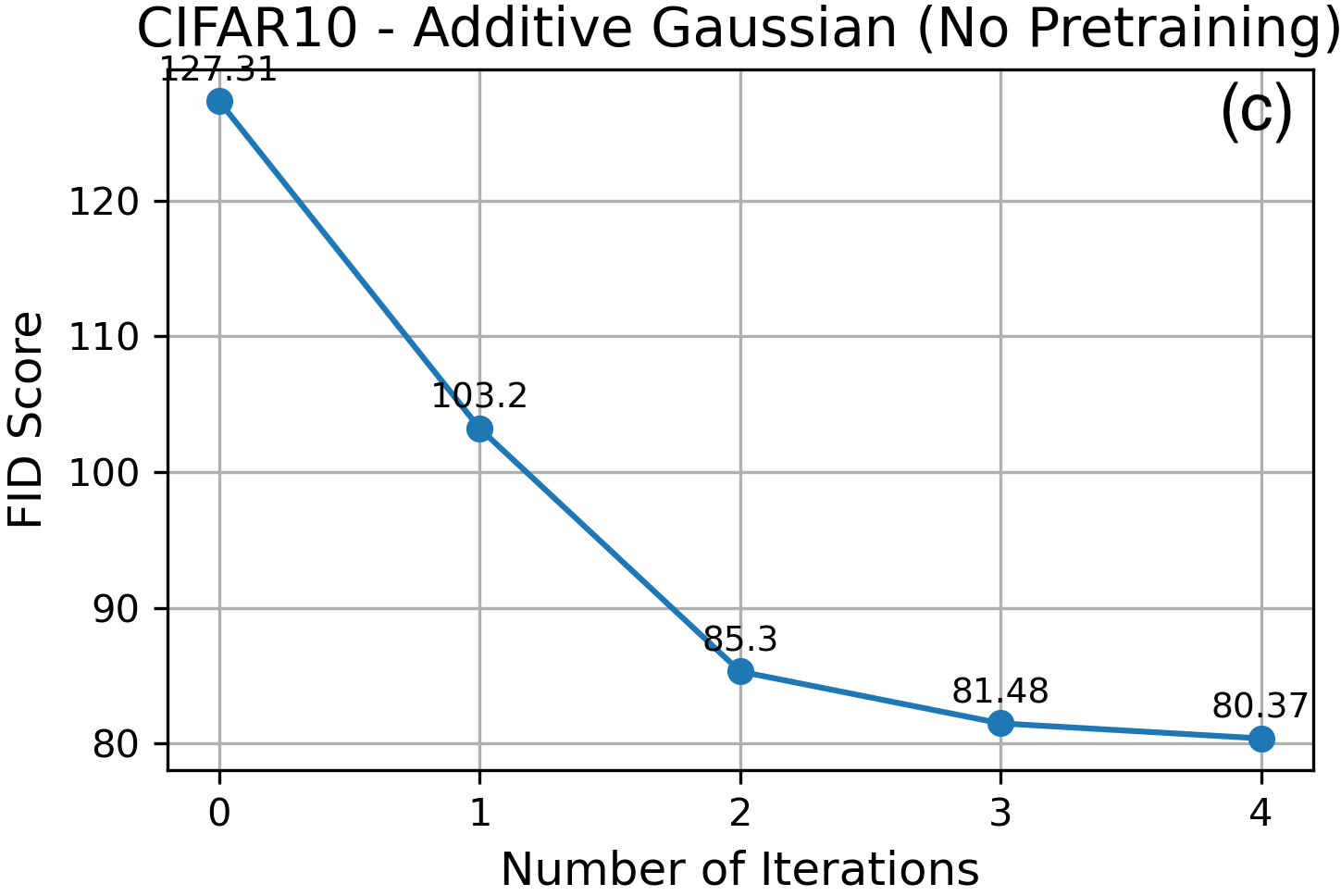}
	\vspace{-0.3em}
	\caption{\modify{FID scores of SFBD-OMNI under different settings. (a) Online SFBD-OMNI FIDs under grayscale corruption for varying numbers of clean samples. (b) FID trajectories of the online version under additive Gaussian corruption ($\sigma = 0.5$) with 2k clean samples, for both the running reconstructed set $\Ec$ and a newly generated sample set. (c) FIDs of the classical SFBD-OMNI under additive Gaussian ($\sigma = 0.2$) without clean samples; iteration~0 represents the untrained model.}}
	\label{fig:additional_ablation}
	\vspace{-0.6em}
\end{figure*}
}%
\modify{
\textbf{Identifiability vs.\ practical recoverability.} 
As discussed in \cref{sec:amb_KL_proj}, although injective corruption operators in principle allow recovery of $p_{\text{data}}$ from corrupted samples alone, the unfavourable sample-complexity rates make this practically infeasible. \cref{fig:additional_ablation}c shows the iteration-wise FID of classical SFBD-OMNI under additive Gaussian noise with no clean samples ($\lambda=0$). The steadily decreasing FID is consistent with \cref{prop:convergence}, which states $\KL{p_{\text{data}}}{p^{k}}$ decreases monotonically (as $h^\dagger = p_{\text{data}}$ if the corruption is injective and $\lambda = 0$), starting from the untrained model $p^0$. However, even after saturating around iteration~4, the FID remains at $80.37$--substantially worse than the $10.81$ achieved when just $50$ clean samples are provided. This gap supports our claim: relying solely on corrupted samples is impractical, whereas even a very small clean set dramatically alleviates the issue.\\[0.5em]
}%
\modify{
\textbf{Further empirical evaluation and insights on practical limitations.} 
To further demonstrate the effectiveness of SFBD-OMNI, we evaluate it on high-resolution satellite and MRI images with Poisson and compressive sensing corruption; see \cref{appx:suppl_latent}. The results support our theoretical findings, yet the remaining artifacts suggest that achieving deployment-quality reconstructions may require domain-aware priors or problem-specific design choices.
\vspace{-0.5em}
}

\section{Disccusion}
\label{sec:disc}
\vspace{-0.5em}
In this work, we proposed SFBD-OMNI, a principled framework for distribution recovery based on diffusion-related models. Unlike GAN-based approaches that rely on adversarial training, our method builds on the Donsker–Varadhan representation of the KL divergence, which reveals an equivalence to a one-sided entropic optimal transport objective. This reformulation naturally yields an alternating minimization scheme that is theoretically grounded and practically stable.

Our analysis shows that SFBD-OMNI can recover the clean data distribution under identifiability conditions, and with the aid of a small set of clean samples, it remains effective even when these conditions fail. To address the computational challenges of sequential training, we introduced an online variant that enables end-to-end optimization without sacrificing optimality guarantees. Experiments on CIFAR-10 and CelebA confirm that the proposed method achieves consistent improvements over representative baselines across a range of corruption processes.

\subsubsection*{Acknowledgments}
We gratefully acknowledge funding support from NSERC and the Canada CIFAR AI Chairs program. Resources used in preparing this research were provided, in part, by the Province of Ontario, the Government of Canada through CIFAR, and companies sponsoring the Vector Institute. We also thank Giannis Daras for his valuable insights and advice.





\bibliography{iclr2026_conference}

@inproceedings{
LipmanCBNL2023,
title={Flow Matching for Generative Modeling},
author={Yaron Lipman and Ricky T. Q. Chen and Heli Ben-Hamu and Maximilian Nickel and Matthew Le},
booktitle={The Eleventh International Conference on Learning Representations },
year={2023},
url={https://openreview.net/forum?id=PqvMRDCJT9t}
}

@article{LustigDP2007,
  author    = {Lustig, Michael and Donoho, David and Pauly, John M.},
  title     = {Sparse MRI: The Application of Compressed Sensing for Rapid MR Imaging},
  journal   = {Magnetic Resonance in Medicine},
  volume    = {58},
  number    = {6},
  pages     = {1182--1195},
  year      = {2007},
  doi       = {10.1002/mrm.21391}
}

@inproceedings{
DarasRKTD2025,
title={Ambient Diffusion Omni},
author={Giannis Daras and Adrian Rodriguez-Munoz and Adam Klivans and Antonio Torralba and Constantinos Costis Daskalakis},
booktitle={NeurIPS 2025 Workshop: Reliable ML from Unreliable Data},
year={2025},
url={https://openreview.net/forum?id=cGLjmD07s9}
}

@misc{Chen2018,
	author={Chen, Ricky T. Q.},
	title={torchdiffeq},
	year={2018},
	url={https://github.com/rtqichen/torchdiffeq},
}

@article{ChengHL2017,
   title={Remote Sensing Image Scene Classification: Benchmark and State of the Art},
   volume={105},
   ISSN={1558-2256},
   url={http://dx.doi.org/10.1109/JPROC.2017.2675998},
   DOI={10.1109/jproc.2017.2675998},
   number={10},
   journal={Proceedings of the IEEE},
   publisher={Institute of Electrical and Electronics Engineers (IEEE)},
   author={Cheng, Gong and Han, Junwei and Lu, Xiaoqiang},
   year={2017},
   month={Oct},
   pages={1865-1883}
}

@inproceedings{RombachBLEO2021,
    author    = {Rombach, Robin and Blattmann, Andreas and Lorenz, Dominik and Esser, Patrick and Ommer, Bj\"orn},
    title     = {High-Resolution Image Synthesis With Latent Diffusion Models},
    booktitle = {Proceedings of the IEEE/CVF Conference on Computer Vision and Pattern Recognition (CVPR)},
    month     = {June},
    year      = {2022},
    pages     = {10684-10695}
}

@article{MakitaloF2011,
  author={Makitalo, Markku and Foi, Alessandro},
  journal={IEEE Transactions on Image Processing}, 
  title={Optimal Inversion of the Anscombe Transformation in Low-Count Poisson Image Denoising}, 
  year={2011},
  volume={20},
  number={1},
  pages={99-109},
  doi={10.1109/TIP.2010.2056693}
  }

@article{BortoliLTTN2023,
  title   = {Augmented Bridge Matching},
  author  = {De Bortoli, Valentin and Liu, Guo-Hua and Chen, Tianrong and Theodorou, Evangelos A. and Nie, Weili},
  journal = {arXiv preprint arXiv:2311.06978},
  year    = {2023},
  url     = {https://arxiv.org/abs/2311.06978}
}

@article{Leonard2014,
  author    = {Christian L{\'e}onard},
  title     = {A survey of the Schr{\"o}dinger problem and some of its connections with optimal transport},
  journal   = {Discrete and Continuous Dynamical Systems - A},
  volume    = {34},
  number    = {4},
  pages     = {1533--1574},
  year      = {2014}
}

@InProceedings{MurataSLTUE2023,
  title = 	 {{G}ibbs{DDRM}: A Partially Collapsed {G}ibbs Sampler for Solving Blind Inverse Problems with Denoising Diffusion Restoration},
  author =       {Murata, Naoki and Saito, Koichi and Lai, Chieh-Hsin and Takida, Yuhta and Uesaka, Toshimitsu and Mitsufuji, Yuki and Ermon, Stefano},
  booktitle = 	 {Proceedings of the 40th International Conference on Machine Learning},
  pages = 	 {25501--25522},
  year = 	 {2023},
  editor = 	 {Krause, Andreas and Brunskill, Emma and Cho, Kyunghyun and Engelhardt, Barbara and Sabato, Sivan and Scarlett, Jonathan},
  volume = 	 {202},
  series = 	 {Proceedings of Machine Learning Research},
  month = 	 {23--29 Jul},
  publisher =    {PMLR},
  pdf = 	 {https://proceedings.mlr.press/v202/murata23a/murata23a.pdf},
  url = 	 {https://proceedings.mlr.press/v202/murata23a.html},
}

@incollection{Hasinoff2014,
  author    = {Hasinoff, Samuel W.},
  editor    = {Ikeuchi, Katsushi},
  title     = {Photon, Poisson Noise},
  booktitle = {Computer Vision: A Reference Guide},
  year      = {2014},
  publisher = {Springer US},
  address   = {Boston, MA},
  pages     = {608--610},
  isbn      = {978-0-387-31439-6},
  doi       = {10.1007/978-0-387-31439-6_482},
}

@book{Schott2007,
  author    = {John R. Schott},
  title     = {Remote Sensing: The Image Chain Approach},
  edition   = {2nd},
  year      = {2007},
  publisher = {Oxford University Press},
  address   = {New York},
  isbn      = {0-19-773259-3},
  series    = {Oxford Scholarship Online},
  keywords  = {Remote sensing},
  language  = {English},
  lccn      = {2006022714}
}

@article{ChungSRY2022,
  title={Improving diffusion models for inverse problems using manifold constraints},
  author={Chung, Hyungjin and Sim, Byeongsu and Ryu, Dohoon and Ye, Jong Chul},
  journal={Advances in Neural Information Processing Systems},
  volume={35},
  pages={25683--25696},
  year={2022}
}

@inproceedings{
SongSXE2022,
title={Solving Inverse Problems in Medical Imaging with Score-Based Generative Models},
author={Yang Song and Liyue Shen and Lei Xing and Stefano Ermon},
booktitle={International Conference on Learning Representations},
year={2022},
url={https://openreview.net/forum?id=vaRCHVj0uGI}
}

@InProceedings{FengSRCKF2023,
    author    = {Feng, Berthy T. and Smith, Jamie and Rubinstein, Michael and Chang, Huiwen and Bouman, Katherine L. and Freeman, William T.},
    title     = {Score-Based Diffusion Models as Principled Priors for Inverse Imaging},
    booktitle = {Proceedings of the IEEE/CVF International Conference on Computer Vision (ICCV)},
    month     = {October},
    year      = {2023},
    pages     = {10520-10531}
}

@inproceedings{Cuturi2013,
  author    = {Marco Cuturi},
  title     = {Sinkhorn Distances: Lightspeed Computation of Optimal Transport},
  booktitle = {Advances in Neural Information Processing Systems (NeurIPS)},
  year      = {2013}
}

@misc{ZbontarKSMHMDS2018,
Author = {Jure Zbontar and Florian Knoll and Anuroop Sriram and Tullie Murrell and Zhengnan Huang and Matthew J. Muckley and Aaron Defazio and Ruben Stern and Patricia Johnson and Mary Bruno and Marc Parente and Krzysztof J. Geras and Joe Katsnelson and Hersh Chandarana and Zizhao Zhang and Michal Drozdzal and Adriana Romero and Michael Rabbat and Pascal Vincent and Nafissa Yakubova and James Pinkerton and Duo Wang and Erich Owens and C. Lawrence Zitnick and Michael P. Recht and Daniel K. Sodickson and Yvonne W. Lui},
Title = {fastMRI: An Open Dataset and Benchmarks for Accelerated MRI},
Year = {2018},
Eprint = {arXiv:1811.08839},
}

@InProceedings{ZhangJZYJC2023,
  title = 	 {Towards Coherent Image Inpainting Using Denoising Diffusion Implicit Models},
  author =       {Zhang, Guanhua and Ji, Jiabao and Zhang, Yang and Yu, Mo and Jaakkola, Tommi and Chang, Shiyu},
  booktitle = 	 {Proceedings of the 40th International Conference on Machine Learning},
  pages = 	 {41164--41193},
  year = 	 {2023},
  editor = 	 {Krause, Andreas and Brunskill, Emma and Cho, Kyunghyun and Engelhardt, Barbara and Sabato, Sivan and Scarlett, Jonathan},
  volume = 	 {202},
  series = 	 {Proceedings of Machine Learning Research},
  month = 	 {23--29 Jul},
  publisher =    {PMLR},
  pdf = 	 {https://proceedings.mlr.press/v202/zhang23q/zhang23q.pdf},
  url = 	 {https://proceedings.mlr.press/v202/zhang23q.html},
}

@inproceedings{
ChungKMKY2023,
title={Diffusion Posterior Sampling for General Noisy Inverse Problems},
author={Hyungjin Chung and Jeongsol Kim and Michael Thompson Mccann and Marc Louis Klasky and Jong Chul Ye},
booktitle={The Eleventh International Conference on Learning Representations },
year={2023},
url={https://openreview.net/forum?id=OnD9zGAGT0k}
}

@InProceedings{LiuVHTNA2023,
  title = 	 {{I}$^2${SB}: Image-to-Image Schrödinger Bridge},
  author =       {Liu, Guan-Horng and Vahdat, Arash and Huang, De-An and Theodorou, Evangelos and Nie, Weili and Anandkumar, Anima},
  booktitle = 	 {Proceedings of the 40th International Conference on Machine Learning},
  pages = 	 {22042--22062},
  year = 	 {2023},
  editor = 	 {Krause, Andreas and Brunskill, Emma and Cho, Kyunghyun and Engelhardt, Barbara and Sabato, Sivan and Scarlett, Jonathan},
  volume = 	 {202},
  series = 	 {Proceedings of Machine Learning Research},
  month = 	 {23--29 Jul},
  publisher =    {PMLR},
  pdf = 	 {https://proceedings.mlr.press/v202/liu23ai/liu23ai.pdf},
  url = 	 {https://proceedings.mlr.press/v202/liu23ai.html},
}

@article{Seibert2008,
  author    = {Seibert, J. Anthony},
  title     = {Digital radiography: image quality and radiation dose},
  journal   = {Health Physics},
  year      = {2008},
  volume    = {95},
  number    = {5},
  pages     = {586--598},
  month     = nov,
  doi       = {10.1097/01.HP.0000326338.14198.a2},
  issn      = {0017-9078},
  eissn     = {1538-5159},
  pmid      = {18849693},
  publisher = {Lippincott Williams \& Wilkins},
  address   = {United States}
}

@article{Peluchetti2023,
  author  = {Stefano Peluchetti},
  title   = {Diffusion Bridge Mixture Transports, SchrÃ¶dinger Bridge Problems and Generative Modeling},
  journal = {Journal of Machine Learning Research},
  year    = {2023},
  volume  = {24},
  number  = {374},
  pages   = {1--51},
  url     = {http://jmlr.org/papers/v24/23-0527.html}
}

@article{ChimittC2023,
  author    = {Nicholas Chimitt and Stanley H. Chan},
  title     = {Computational Imaging Through Atmospheric Turbulence},
  journal   = {Foundations and Trends in Computer Graphics and Vision},
  year      = {2023},
  volume    = {15},
  number    = {4},
  pages     = {253--508},
  doi       = {10.1561/0600000103},
  publisher = {Now Publishers},
  address   = {USA}
}

@inproceedings{
LuWY2025,
title={Stochastic Forward Backward Deconvolution: Training Diffusion Models with Finite Noisy Datasets},
author={Haoye Lu and Qifan Wu and Yaoliang Yu},
booktitle={Forty-second International Conference on Machine Learning},
year={2025},
url={https://openreview.net/forum?id=WrWqv3mpQx}
}

@inproceedings{DarasCD25,
  title     = {How Much is a Noisy Image Worth? Data Scaling Laws for Ambient Diffusion},
  author    = {Giannis Daras and Yeshwanth Cherapanamjeri and Constantinos Costis Daskalakis},
  booktitle = {The Thirteenth International Conference on Learning Representations},
  year      = {2025},
  url       = {https://openreview.net/forum?id=qZwtPEw2qN},
}

@inproceedings{
ZhouLKE2024,
title={Denoising Diffusion Bridge Models},
author={Linqi Zhou and Aaron Lou and Samar Khanna and Stefano Ermon},
booktitle={The Twelfth International Conference on Learning Representations},
year={2024},
url={https://openreview.net/forum?id=FKksTayvGo}
}

@article{anderson1982,
  title={Reverse-time diffusion equation models},
  author={Anderson, B D O},
  journal={Stochastic Processes and their Applications},
  volume={12},
  number={3},
  pages={313--326},
  year={1982},
  url = {https://doi.org/10.1016/0304-4149(82)90051-5},
}

@Article{VargasTLL2021,
AUTHOR = {Vargas, Francisco and Thodoroff, Pierre and Lamacraft, Austen and Lawrence, Neil},
TITLE = {Solving Schrodinger Bridges via Maximum Likelihood},
JOURNAL = {Entropy},
VOLUME = {23},
YEAR = {2021},
NUMBER = {9},
ARTICLE-NUMBER = {1134},
URL = {https://www.mdpi.com/1099-4300/23/9/1134},
}

@article{LiuLWT2015,
  title     = {Deep Learning Face Attributes in the Wild},
  author    = {Ziwei Liu and Ping Luo and Xiaogang Wang and Xiaoou Tang},
  journal   = {Proceedings of International Conference on Computer Vision (ICCV)},
  year      = {2015},
  url       = {http://mmlab.ie.cuhk.edu.hk/projects/CelebA.html}
}

@inproceedings{KingmaBa2014,
  title={Adam: A Method for Stochastic Optimization},
  author={Kingma, Diederik P and Ba, Jimmy},
  booktitle={International Conference for Learning Representations},
  year={2015},
  url={https://arxiv.org/abs/1412.6980},
}

@INPROCEEDINGS{AaliAKT2023,
  author={Aali, Asad and Arvinte, Marius and Kumar, Sidharth and Tamir, Jonathan I.},
  booktitle={57th Asilomar Conference on Signals, Systems, and Computers}, 
  title={Solving Inverse Problems with Score-Based Generative Priors learned from Noisy Data}, 
  year={2023},
  pages={837--843},
  url={https://doi.org/10.1109/IEEECONF59524.2023.10477042}
 }

@inproceedings{
WangZHCZ2023,
title={Diffusion-{GAN}: Training {GAN}s with Diffusion},
author={Zhendong Wang and Huangjie Zheng and Pengcheng He and Weizhu Chen and Mingyuan Zhou},
booktitle={The Eleventh International Conference on Learning Representations },
year={2023},
url={https://openreview.net/forum?id=HZf7UbpWHuA}
}

@inproceedings{
BaiWCS2024,
title={An Expectation-Maximization Algorithm for Training Clean Diffusion Models from Corrupted Observations},
author={Weimin Bai and Yifei Wang and Wenzheng Chen and He Sun},
booktitle={The Thirty-eighth Annual Conference on Neural Information Processing Systems},
year={2024},
url={https://openreview.net/forum?id=jURBh4V9N4}
}

@inproceedings{
MengHSSWZE2022,
title={{SDE}dit: Guided Image Synthesis and Editing with Stochastic Differential Equations},
author={Chenlin Meng and Yutong He and Yang Song and Jiaming Song and Jiajun Wu and Jun-Yan Zhu and Stefano Ermon},
booktitle={International Conference on Learning Representations},
year={2022},
url={https://openreview.net/forum?id=aBsCjcPu_tE}
}

@book{Meister2009,
  author    = {Alexander Meister},
  title     = {Deconvolution Problems in Nonparametric Statistics},
  year      = {2009},
  publisher = {Springer},
  url       = {https://doi.org/10.1007/978-3-540-87557-4}
}

@inproceedings{
DarasSDGDK2023,
title={Ambient Diffusion: Learning Clean Distributions from Corrupted Data},
author={Giannis Daras and Kulin Shah and Yuval Dagan and Aravind Gollakota and Alex Dimakis and Adam Klivans},
booktitle={Thirty-seventh Conference on Neural Information Processing Systems},
year={2023},
url={https://openreview.net/forum?id=wBJBLy9kBY}
}

@inproceedings{
DarasDD2024,
title={Consistent Diffusion Meets Tweedie: Training Exact Ambient Diffusion Models with Noisy Data},
author={Giannis Daras and Alex Dimakis and Constantinos Costis Daskalakis},
booktitle={Forty-first International Conference on Machine Learning},
year={2024},
url={https://openreview.net/forum?id=PlVjIGaFdH}
}

@inproceedings{DarasA2023,
  title={Solving Inverse Problems with Ambient Diffusion},
  author={Daras, Giannis and Dimakis, Alex},
  booktitle={NeurIPS 2023 Workshop on Deep Learning and Inverse Problems},
  year={2023},
  url = {https://openreview.net/forum?id=mGwg10bgHk},
}

@inproceedings{
BoraPD2018,
title={Ambient{GAN}: Generative models from lossy measurements},
author={Ashish Bora and Eric Price and Alexandros G. Dimakis},
booktitle={International Conference on Learning Representations},
year={2018},
url={https://openreview.net/forum?id=Hy7fDog0b},
}

@inproceedings{
LiuJHCLGH2020,
title={On the Variance of the Adaptive Learning Rate and Beyond},
author={Liyuan Liu and Haoming Jiang and Pengcheng He and Weizhu Chen and Xiaodong Liu and Jianfeng Gao and Jiawei Han},
booktitle={International Conference on Learning Representations},
year={2020},
url={https://openreview.net/forum?id=rkgz2aEKDr}
}

@article{DonskerV1983,
author = {Donsker, M. D. and Varadhan, S. R. S.},
title = {Asymptotic evaluation of certain markov process expectations for large time. IV},
journal = {Communications on Pure and Applied Mathematics},
volume = {36},
number = {2},
pages = {183-212},
doi = {https://doi.org/10.1002/cpa.3160360204},
url = {https://onlinelibrary.wiley.com/doi/abs/10.1002/cpa.3160360204},
eprint = {https://onlinelibrary.wiley.com/doi/pdf/10.1002/cpa.3160360204},
year = {1983}
}

@inproceedings{HoSGCNF2022,
title={Video Diffusion Models},
author={Jonathan Ho and Tim Salimans and Alexey A. Gritsenko and William Chan and Mohammad Norouzi and David J. Fleet},
booktitle={Advances in Neural Information Processing Systems},
year={2022},
url={https://openreview.net/forum?id=f3zNgKga_ep}
}

@inproceedings{
ArjovskyB2017,
title={Towards Principled Methods for Training Generative Adversarial Networks},
author={Martin Arjovsky and Leon Bottou},
booktitle={International Conference on Learning Representations},
year={2017},
url={https://openreview.net/forum?id=Hk4_qw5xe}
}

@techreport{Krizhevsky2009,
  title={Learning multiple layers of features from tiny images},
  author={Krizhevsky, Alex and Hinton, Geoffrey},
  year={2009},
  institution={University of Toronto},
  type={Technical Report},
  url = {https://www.cs.toronto.edu/~kriz/learning-features-2009-TR.pdf},
}

@inproceedings{LiuGL2022,
  title={Flow Straight and Fast: Learning to Generate and Transfer Data with Rectified Flow},
  author={Liu, Xingchao and Gong, Chengyue and Liu, Qiang},
  booktitle={The Eleventh International Conference on Learning Representations},
  year={2022},
  url = {https://openreview.net/forum?id=XVjTT1nw5z},
}

@inproceedings{HoJA2020,
 author = {Ho, Jonathan and Jain, Ajay and Abbeel, Pieter},
 booktitle = {Advances in Neural Information Processing Systems},
 pages = {6840--6851},
 title = {Denoising Diffusion Probabilistic Models},
 url = {https://proceedings.neurips.cc/paper_files/paper/2020/file/4c5bcfec8584af0d967f1ab10179ca4b-Paper.pdf},
 year = {2020}
}

@inproceedings{
SongME2021,
title={Denoising Diffusion Implicit Models},
author={Jiaming Song and Chenlin Meng and Stefano Ermon},
booktitle={International Conference on Learning Representations},
year={2021},
url={https://openreview.net/forum?id=St1giarCHLP}
}

@InProceedings{RombachBLEO2022,
    author    = {Rombach, Robin and Blattmann, Andreas and Lorenz, Dominik and Esser, Patrick and Ommer, Bj\"orn},
    title     = {High-Resolution Image Synthesis With Latent Diffusion Models},
    booktitle = {Proceedings of the IEEE/CVF Conference on Computer Vision and Pattern Recognition (CVPR)},
    year      = {2022},
    pages     = {10684--10695},
    url       = {https://doi.org/10.1109/CVPR52688.2022.01042},
}

@ARTICLE{YangYWWWZY2023,
  author={Yang, Dongchao and Yu, Jianwei and Wang, Helin and Wang, Wen and Weng, Chao and Zou, Yuexian and Yu, Dong},
  journal={IEEE/ACM Transactions on Audio, Speech, and Language Processing}, 
  title={Diffsound: Discrete Diffusion Model for Text-to-Sound Generation}, 
  year={2023},
  volume={31},
  number={},
  pages={1720--1733},
  url={https://doi.org/10.1109/TASLP.2023.3268730},
}

@InProceedings{DicksteinWMG2015,
  title =    {Deep Unsupervised Learning using Nonequilibrium Thermodynamics},
  author =   {Sohl-Dickstein, Jascha and Weiss, Eric and Maheswaranathan, Niru and Ganguli, Surya},
  booktitle =    {Proceedings of the 32nd International Conference on Machine Learning},
  pages =    {2256--2265},
  year =   {2015},
  url =    {https://proceedings.mlr.press/v37/sohl-dickstein15.html},
}

@inproceedings{kong2021diffwave,
title={DiffWave: A Versatile Diffusion Model for Audio Synthesis},
author={Zhifeng Kong and Wei Ping and Jiaji Huang and Kexin Zhao and Bryan Catanzaro},
booktitle={International Conference on Learning Representations},
year={2021},
url={https://openreview.net/forum?id=a-xFK8Ymz5J}
}

@article{CroitoruHIS2023,
  author={Croitoru, Florinel-Alin and Hondru, Vlad and Ionescu, Radu Tudor and Shah, Mubarak},
  journal={IEEE Transactions on Pattern Analysis and Machine Intelligence}, 
  title={Diffusion Models in Vision: A Survey}, 
  year={2023},
  volume={45},
  number={9},
  pages={10850--10869},
  url = {https://doi.org/10.1109/TPAMI.2023.3261988},
}

@inproceedings{NowozinCT2016,
 author = {Nowozin, Sebastian and Cseke, Botond and Tomioka, Ryota},
 booktitle = {Advances in Neural Information Processing Systems},
 editor = {D. Lee and M. Sugiyama and U. Luxburg and I. Guyon and R. Garnett},
 pages = {},
 publisher = {Curran Associates, Inc.},
 title = {f-GAN: Training Generative Neural Samplers using Variational Divergence Minimization},
 url = {https://proceedings.neurips.cc/paper_files/paper/2016/file/cedebb6e872f539bef8c3f919874e9d7-Paper.pdf},
 volume = {29},
 year = {2016}
}

@InProceedings{MeschederGeigerN2018,
  title =    {Which Training Methods for {GAN}s do actually Converge?},
  author =       {Mescheder, Lars and Geiger, Andreas and Nowozin, Sebastian},
  booktitle =    {Proceedings of the 35th International Conference on Machine Learning},
  pages =    {3481--3490},
  year =   {2018},
  volume =   {80},
  series =   {Proceedings of Machine Learning Research},
  month =    {10--15 Jul},
  publisher =    {PMLR},
  pdf =    {http://proceedings.mlr.press/v80/mescheder18a/mescheder18a.pdf},
  url =    {https://proceedings.mlr.press/v80/mescheder18a.html},
}

@inproceedings{GoodfellowPMXWOCB2014,
 author = {Goodfellow, Ian and Pouget-Abadie, Jean and Mirza, Mehdi and Xu, Bing and Warde-Farley, David and Ozair, Sherjil and Courville, Aaron and Bengio, Yoshua},
 booktitle = {Advances in Neural Information Processing Systems},
 title = {Generative Adversarial Nets},
 url = {https://proceedings.neurips.cc/paper_files/paper/2014/file/5ca3e9b122f61f8f06494c97b1afccf3-Paper.pdf},
 year = {2014}
}

@inproceedings{MiyatoKKY2018,
title={Spectral Normalization for Generative Adversarial Networks},
author={Takeru Miyato and Toshiki Kataoka and Masanori Koyama and Yuichi Yoshida},
booktitle={International Conference on Learning Representations},
year={2018},
url={https://openreview.net/forum?id=B1QRgziT-},
}

@inproceedings{FedusRLDMG2018,
title={Many Paths to Equilibrium: {GAN}s Do Not Need to Decrease a Divergence At Every Step},
author={William Fedus and Mihaela Rosca and Balaji Lakshminarayanan and Andrew M. Dai and Shakir Mohamed and Ian Goodfellow},
booktitle={International Conference on Learning Representations},
year={2018},
url={https://openreview.net/forum?id=ByQpn1ZA-},
}

@article{Goodfellow2016,
  title={Neurips 2016 tutorial: Generative adversarial networks},
  author={Goodfellow, Ian},
  journal={arXiv preprint arXiv:1701.00160},
  year={2016}
}

@inproceedings{KarrasAAL22,
  title={Elucidating the design space of diffusion-based generative models},
  author={Karras, Tero and Aittala, Miika and Aila, Timo and Laine, Samuli},
  booktitle={Advances in Neural Information Processing Systems},
  year={2022},
  url = {https://openreview.net/forum?id=k7FuTOWMOc7},
}

@InProceedings{SongDCS2023,
  title =    {Consistency Models},
  author =       {Song, Yang and Dhariwal, Prafulla and Chen, Mark and Sutskever, Ilya},
  booktitle =    {Proceedings of the 40th International Conference on Machine Learning},
  pages =    {32211--32252},
  year =   {2023},
  url =    {https://proceedings.mlr.press/v202/song23a.html},
}

@InProceedings{BatsonR2019,
  title = 	 {{N}oise2{S}elf: Blind Denoising by Self-Supervision},
  author =       {Batson, Joshua and Royer, Loic},
  booktitle = 	 {Proceedings of the 36th International Conference on Machine Learning},
  pages = 	 {524--533},
  year = 	 {2019},
  editor = 	 {Chaudhuri, Kamalika and Salakhutdinov, Ruslan},
  volume = 	 {97},
  series = 	 {Proceedings of Machine Learning Research},
  month = 	 {09--15 Jun},
  publisher =    {PMLR},
  pdf = 	 {http://proceedings.mlr.press/v97/batson19a/batson19a.pdf},
  url = 	 {https://proceedings.mlr.press/v97/batson19a.html},
}

@inproceedings{SongDCKKEP2021,
title={Score-Based Generative Modeling through Stochastic Differential Equations},
author={Yang Song and Jascha Sohl-Dickstein and Diederik P Kingma and Abhishek Kumar and Stefano Ermon and Ben Poole},
booktitle={International Conference on Learning Representations},
year={2021},
url={https://openreview.net/forum?id=PxTIG12RRHS}
}

@inproceedings{
LuLY2026,
title={{SFBD} Flow: A Continuous-Optimization Framework for Training Diffusion Models with Noisy Samples},
author={Haoye Lu and Darren Lo and Yaoliang Yu},
booktitle={The 29th International Conference on Artificial Intelligence and Statistics},
year={2026},
url={https://openreview.net/forum?id=bm9E45jCdu}
}
\bibliographystyle{iclr2026_conference}

\newpage

\appendix

\section{LLM usage}
LLMs were used to assist with text refinement and data formatting, but not for generating core research content.

\section{Related Work}
\label{appx:related}
Recovering the underlying clean distribution from noisy or incomplete observations has been an active line of research in recent years. \citet{BoraPD2018} introduced AmbientGAN, demonstrating both theoretically and empirically that GANs can recover the true distribution even when only corrupted samples, such as randomly masked images, are available. Extending this idea, \citet{WangZHCZ2023} showed that, under mild assumptions, if corrupted real and generated samples are indistinguishable, the learned model necessarily recovers the ground-truth distribution. In this work, we present a comprehensive study of the identifiability conditions of corruption processes. When these conditions are satisfied, the clean distribution can be recovered directly from corrupted data. When they are not, we propose an effective strategy that leverages a small number of clean samples to enable substantial recovery of the underlying distribution.

Building on the success of training GANs with corrupted data, several recent studies have investigated whether diffusion models can also be trained under additive Gaussian corruption~\citep{DarasA2023, DarasDD2024}. \citet{DarasDD2024} demonstrated that when corruption is induced by a forward diffusion process, the marginal distribution at any time step constrains those at all other steps through a set of consistency relations. Exploiting this property, they showed that training on distributions above the corruption noise level allows the model to infer distributions at lower noise levels by enforcing consistency—an approach that has proven effective for fine-tuning latent diffusion models. Nevertheless, subsequent work found that training such models from scratch is impractical, as it would require an unrealistically large number of corrupted samples~\citep{LuWY2025, DarasCD25}. To address this, both \citet{LuWY2025,LuLY2026} and \citet{DarasCD25} proposed augmenting training with a small set of copyright-free clean samples, demonstrating that diffusion models can indeed be trained from scratch to achieve strong performance, albeit through distinct methodological routes.

Beyond recovering data distributions from corrupted observations, there has been growing interest in leveraging pretrained diffusion models to solve inverse problems, where the goal is to reconstruct underlying images from corrupted inputs \citep{ChungKMKY2023, FengSRCKF2023, ZhangJZYJC2023, ChungSRY2022, SongSXE2022, MurataSLTUE2023}. While these methods also perform recovery, they operate in a fundamentally different regime: they assume access to a pretrained diffusion model that already encodes the ground-truth distribution. By contrast, our work addresses the from-scratch setting, where the objective is to learn the ground-truth distribution itself directly from corrupted samples, without relying on a pretrained model.

Very recently, \citet{DarasRKTD2025} proposed a complementary strategy, Ambient-o, which also seeks to align corrupted samples with the clean distribution. The method adds extra Gaussian noise to corrupted inputs so that, once sufficiently noised, they become nearly indistinguishable from clean–noisy samples and can be used directly for diffusion-model training -- an idea similar in spirit to SDEdit \citep{MengHSSWZE2022}. Because the alignment is achieved through noising rather than modelling the corruption, Ambient-o is agnostic to the underlying corruption process and requires no knowledge of it. However, this design introduces a trade-off: stronger noising improves distributional alignment but may also remove informative structure from the observations. In contrast, by assuming access to the corruption process as a black-box generator, SFBD-OMNI requires no extra noise and therefore preserves the full signal.

\section{Gaussian noise regularization for deterministic sampling paths}
\label{appx:gaussian_noise_reg_deter_sampling}
In \cref{sec:prel}, we discuss how bridge models can be employed to learn the posterior distribution
\[
u_\yv(\xv) = \frac{\mu(\xv)\, r(\yv \mid \xv)}{\int \mu(\xv')\, r(\yv \mid \xv') \diff \xv'},
\]
given access to samples from $\mu(\xv)$ and the ability to query the corruption kernel $r(\yv \mid \xv)$ as a black-box generator.

A challenge arises when the interpolation path is chosen to be the straight-line segment between $\xv$ and $\yv$. In this case, the backward sampling scheme in \cref{eq:bridge_sampling} reduces to the special case $g=0$, so sampling is performed by solving an ODE. Because the dynamics is deterministic, the model can no longer represent a distribution, leading to a degeneracy.

To avoid this issue, we perturb $\yv$ with a small Gaussian noise before using it as the endpoint $\xv_1$ in both training and sampling. This restores the stochasticity of the interpolation: the model learns a deterministic flow transporting $u_\yv(\xv)$ to $\Nc(\yv, \sigma^2 \Iv)$, which aligns well with standard flow-matching formulations \citep{LipmanCBNL2023, LiuGL2022}.

Importantly, the perturbation is applied only to the endpoint used as the ODE’s initial condition; the model itself is still conditioned on the original, unperturbed $\yv$. Thus, the perturbation alters only the sampling path, not the conditioning variable. 

From a conditional VAE perspective, this is similar to adding a small noise to the latent code $\zv$ while keeping the conditioning variable fixed (e.g., a class label or observed image). Such perturbations regularize the decoder but do not change the underlying conditional distribution $p(\xv \mid \yv)$ being modelled. Likewise, in our setting, the flow model continues to learn the correct posterior $u_{\thetav}(\xv \mid \yv)$ because $\yv$, the variable that defines the conditional law, remains unchanged. The perturbation merely prevents degeneracy in the ODE initialization and does not distort the learned conditional mapping.

\section{Theoretical results related to the identifiability}
\label{appx:indentifiability}
\Identifiability*  
\begin{proof}
Let $q:=\Tc_r p_{\text{data}}$ and define
\[
J(p)\;:=\;\KL{q}{\Tc_r p}.
\]

\smallskip
\noindent\textit{Convexity.}
For $p_1,p_2\in\Pc(X)$ and $t\in(0,1)$,
\[
\Tc_r\!\big(tp_1+(1-t)p_2\big)=t\,\Tc_r p_1+(1-t)\,\Tc_r p_2.
\]
Since the map $m\mapsto \KL{q}{m}$ is strictly convex,
\[
J\!\big(tp_1+(1-t)p_2\big)
=\KL{q}{t\,\Tc_r p_1+(1-t)\,\Tc_r p_2}
\; < \; t\,J(p_1)+(1-t)\,J(p_2).
\]

\smallskip
\noindent\textit{Injective case.}
Assume $\Tc_r$ is injective on $\Pc(X)$. If $p_1\neq p_2$ then $\Tc_r p_1\neq \Tc_r p_2$, and by \emph{strict} convexity of $m\mapsto \KL{m}{q}$,
\[
J\!\big(tp_1+(1-t)p_2\big)\;<\; t\,J(p_1)+(1-t)\,J(p_2)\qquad (t\in(0,1)).
\]
Thus $J$ is strictly convex in $p$. Since $J(p_{\text{data}})=\KL{q}{q}=0$, $p_{\text{data}}$ is the unique minimizer, i.e., $p^\ast=p_{\text{data}}$.
(Continuity of $\xv \mapsto \int f(\yv)\,r(\yv\mid \xv) \diff \yv$ for bounded continuous $f$ gives the usual l.s.c./compactness to ensure well-posedness; uniqueness comes from strict convexity.)

\smallskip
\noindent\textit{Non-injective case.}
If $\Tc_r$ is not injective, then for any $p$ with $\Tc_r p=q$,
\[
J(p)=\KL{q}{q}=0,
\]
which is the global minimum. Hence every $p\in \Sc(q):=\{p\in\Pc(X):\Tc_r p=q\}$ is a minimizer.
\end{proof}

\Convergence*
\begin{proof}
Let $\Tc_r p = m_p$ and define
\[
F(p) := D_{\mathrm{KL}}(q\|\Tc_r p), 
\qquad
G(p) := D_{\mathrm{KL}}(h\|p).
\]
For $\lambda>0$ define
\[
p^\star_\lambda \in \arg\min_{p}\ \Big\{\,F(p)+\lambda\,G(p)\,\Big\},
\]
Then the optimality against $h^\dagger$ gives, for each $\lambda>0$,
\[
F(p^\star_\lambda)+\lambda\,G(p^\star_\lambda)
\ \le\
F(h^\dagger)+\lambda\,G(h^\dagger)
\ =\ \lambda\,G(h^\dagger),
\]
since $F(h^\dagger)=0$. Hence
\[
0\ \le\ F(p^\star_\lambda)\ \le\ \lambda\big(G(h^\dagger)-G(h^\star_\lambda)\big)\ \le\ \lambda\,G(h^\dagger)\ \to\ 0.
\]
By compactness of $\mathcal P(X)$, pick a subsequence $\lambda_k\downarrow 0$ with
$p^\star_{\lambda_k}\to \bar p$. By lower semicontinuity of $F$,
$F(\bar p)\le \liminf_k F(p^\star_{\lambda_k})=0$, hence $\bar p\in\Sc(q)$.
From the same inequality,
$\lambda_k G(p^\star_{\lambda_k})\le \lambda_k G(h^\dagger)$,
so dividing by $\lambda_k>0$ and taking $\limsup$ gives
$\limsup_k G(p^\star_{\lambda_k})\le G(h^\dagger)$.
By lower semicontinuity of $G$ and convergence $p^\star_{\lambda_k}\to \bar p$,
\[
G(\bar p)\ \le\ \liminf_k G(p^\star_{\lambda_k})
\ \le\ \limsup_k G(p^\star_{\lambda_k})
\ \le\ G(h^\dagger).
\]
Thus $G(\bar p)=\min_{p\in\Sc(q)}G(p)$.
As the minimizer $h^\dagger$ is unique, $p^\star_{\lambda}\to h^\dagger$.
\end{proof}


\section{Theoretical results related to the one-sided OT}
\label{appx:one_sided_OT}
\textbf{The derivation of \cref{eq:LKAP_DV_form}}. 
\begin{align*}
	\KL{q}{\Tc_p} &= \int q(\yv) \log \frac{q(\yv)}{\Tc_r p (\yv)} \diff \yv = \int q(\yv) \log \frac{q(\yv)}{\int p(\xv') r(\yv \mid \xv') \diff \xv'} \diff \yv  \\
	&= -\int q(\yv) \log \int p(\xv') \, r(y \mid \xv') \diff \xv' + C \\
	&= -\int q(\yv) \log \, \Eb_p \big(\exp f_\yv(\xv)\big) \; + C \\
	&= -\int q(\yv) \max_{u_\yv}  \Big[\Eb_{u_\yv}[f_\yv(\xv)] - \KL{u_\yv}{p} \Big] \; + C \\
	&= \min_{u_\yv} \Eb_{q} \left[\KL{u_\yv}{p} - \Eb_{u_\yv}[f_\yv(\xv)] \right] + C,
\end{align*}
where we have applied \cref{eq:donsker_varadhan}, the Donsker-Varadhan variational principle \citep{DonskerV1983} in the second last equation.

\begin{restatable}{lemma}{OptimalCouplingRealizable}
\label{lemma:optimal_pi_realizable}
Given the cost function be $c(\xv,\yv) = -\log r(\yv \mid \xv)$ for some corruption kernel $r$, consider the problem
\[
\min_{\pi \in \Pi_{\yv}(q)}
\;\iint \pi(\xv,\yv)\,c(\xv,\yv)\,d\xv\,d\yv
+ D_{\mathrm{KL}}\!\big(\pi \,\|\, p \otimes q\big),
\]
where $\Pi_{\yv}(q)$ is the set of joint distributions with fixed $\yv$-marginal $q$.  
If $q$ is realizable under $p$ via $r$, i.e. $p \in \Sc(q)$, then the optimizer is
\[
\pi^\star(\xv,\yv) = p(\xv \mid \yv)\,q(\yv),
\]
which has marginals $\pi^\star_\xv = p$ and $\pi^\star_\yv = q$.
\end{restatable}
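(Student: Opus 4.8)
The plan is to use the fixed-$\yv$-marginal constraint to disintegrate every feasible coupling as $\pi(\xv,\yv) = \pi(\xv\mid\yv)\,q(\yv)$, substitute $c(\xv,\yv) = -\log r(\yv\mid\xv)$, and show that the transport cost and the relative-entropy term collapse into a single conditional KL divergence plus a $\pi$-independent constant; optimality then follows from nonnegativity of KL.

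First I would record the elementary consequences of realizability. Since $p\in\Sc(q)$ means $\Tc_r p = q$, the conditional $p(\xv\mid\yv) \coloneqq p(\xv)\,r(\yv\mid\xv)/\Tc_r p(\yv) = p(\xv)\,r(\yv\mid\xv)/q(\yv)$ is a bona fide probability density in $\xv$ for $q$-a.e.\ $\yv$ (it is exactly the posterior $u_\yv$ of \cref{eq:SFBD-OMNI-upd-rule} with $\mu=p$), so $\pi^\star(\xv,\yv) \coloneqq p(\xv\mid\yv)\,q(\yv) = p(\xv)\,r(\yv\mid\xv)$ is well defined. Integrating out $\xv$ gives $\int p(\xv)\,r(\yv\mid\xv)\,\diff\xv = \Tc_r p(\yv) = q(\yv)$, and integrating out $\yv$ gives $\int p(\xv)\,r(\yv\mid\xv)\,\diff\yv = p(\xv)$ since $r(\cdot\mid\xv)$ is a probability kernel; hence $\pi^\star$ is feasible with marginals $\pi^\star_\xv = p$ and $\pi^\star_\yv = q$, as claimed.

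For optimality I would take an arbitrary $\pi\in\Pi_\yv(q)$, write $\pi = \pi(\cdot\mid\yv)\otimes q$, and combine $\iint\pi\,c$ with $\KL{\pi}{p\otimes q}$ into
\[
\iint \pi(\xv\mid\yv)\,q(\yv)\,\log\frac{\pi(\xv\mid\yv)}{p(\xv)\,r(\yv\mid\xv)}\,\diff\xv\,\diff\yv .
\]
Using $p(\xv)\,r(\yv\mid\xv) = p(\xv\mid\yv)\,q(\yv)$ and splitting the logarithm, this equals $\Eb_{\yv\sim q}\!\big[\KL{\pi(\cdot\mid\yv)}{p(\cdot\mid\yv)}\big] - \Eb_{\yv\sim q}[\log q(\yv)]$, whose second term is independent of $\pi$. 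Since KL is nonnegative and vanishes exactly when its arguments agree, the objective is minimized iff $\pi(\cdot\mid\yv) = p(\cdot\mid\yv)$ for $q$-a.e.\ $\yv$, i.e.\ iff $\pi = \pi^\star$; together with the feasibility/marginal computation above this proves the lemma.

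The step needing the most care is the bookkeeping when the transport cost or $\KL{\pi}{p\otimes q}$ is individually infinite: to keep the argmin statement literal I would invoke the mild regularity making both finite for at least one feasible $\pi$ (so that $-\Eb_q[\log q]$ is finite), or else state the conclusion in the caveat-free form $\textrm{objective}(\pi) - \textrm{objective}(\pi^\star) = \Eb_{\yv\sim q}\big[\KL{\pi(\cdot\mid\yv)}{p(\cdot\mid\yv)}\big] \ge 0$. The disintegration $\pi = \pi(\cdot\mid\yv)\otimes q$ is standard since the $\yv$-marginal is pinned to $q$, and the full-support assumption on $r(\cdot\mid\xv)$ (already made when defining $f_\yv$) ensures $c$ and the log-ratios are well defined.
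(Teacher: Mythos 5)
Your proof is correct, and it takes a genuinely different (and somewhat more self-contained) route than the paper's. The paper introduces a Lagrange multiplier for the constraint $\int \pi(\xv,\yv)\,\diff\xv = q(\yv)$, writes down the resulting Gibbs form $\pi^\star(\xv\mid\yv)\propto p(\xv)\,e^{-c(\xv,\yv)} = p(\xv)\,r(\yv\mid\xv)$ with normalizer $Z(\yv)=\int p(\xv)e^{-c(\xv,\yv)}\diff\xv$, and then uses realizability only to identify $Z(\yv)=q(\yv)$ so that $\pi^\star(\xv\mid\yv)=p(\xv\mid\yv)$. You instead disintegrate every feasible coupling against the pinned marginal $q$ and merge the transport cost with the entropy term into $\Eb_{\yv\sim q}\big[\KL{\pi(\cdot\mid\yv)}{p(\cdot\mid\yv)}\big]-\Eb_{\yv\sim q}[\log q(\yv)]$, concluding by nonnegativity of KL; the algebra is right, and realizability enters exactly where you say, through $p(\xv)r(\yv\mid\xv)=p(\xv\mid\yv)\,q(\yv)$. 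Your route buys several things the paper's sketch leaves implicit: global optimality rather than stationarity of a Lagrangian candidate, $q$-a.e.\ uniqueness of the minimizer, an explicit check that $\pi^\star$ is feasible with marginals $p$ and $q$, and an honest handling of the integrability caveat via the difference form of the objective. The paper's Lagrangian derivation, in exchange, is shorter and exhibits the general one-sided entropic-OT Gibbs kernel for an arbitrary cost before specializing to $c=-\log r$, which connects more directly to the entropic-OT reading of Proposition 3.
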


\begin{proof}
Introducing a Lagrange multiplier for the constraint $\int \pi(\xv,\yv)\,d\xv = q(\yv)$, 
the optimal solution takes the form
\[
\pi^\star(\xv,\yv) = \frac{p(\xv)\,q(\yv)\,e^{-c(\xv,\yv)}}{Z(\yv)},
\qquad 
Z(\yv) = \int p(\xv)\,e^{-c(\xv,\yv)}\,d \xv.
\]
With $c(\xv,\yv) = -\log r(\yv \mid \xv)$, this becomes
\[
\pi^\star(\xv \mid \yv) \propto p(\xv)\,r(\yv \mid \xv).
\]
If $q(\yv) = \int p(\xv)\,r(\yv \mid \xv)\,d\xv$, then $Z(\yv) = q(\yv)$, and thus
\[
\pi^\star(\xv \mid \yv) = \frac{p(x)r(\yv \mid \xv)}{q(\yv)} = p(\xv \mid \yv).
\]
Therefore,
\[
\pi^\star(\xv,\yv) = q(\yv)\,p(\xv \mid \yv),
\]
which indeed has marginals $\pi^\star_X = p$ and $\pi^\star_Y = q$.
\end{proof}

\EOT*
\begin{proof}
We note that, by definition, $c = - f_\yv$. Then starting from \cref{eq:LKAP_DV_form},  we have 
	\begin{align*}
		&\min_{u_\yv} \, \Eb_{q} \left[\KL{u_\yv}{p} - \Eb_{u_\yv}[f_\yv(\xv)] \right]\\
		=& \min_\pi \iint  \pi(\xv, \yv)  \log \frac{\pi(\xv, \yv)}{p(\xv) q(\yv)} \diff \xv \diff \yv  \; +  \; \iint \pi(\xv, \yv) c(\xv,\yv) \diff \xv \diff \yv\\
		=& \min_\pi \iint \pi(\xv, \yv) c(\xv,\yv) \diff \xv \diff \yv + \KL{\pi}{p \otimes q}.
	\end{align*}
	where $\pi(\xv, \yv) =  u_\yv(\xv) \, q(\yv)$ consisting of all joint distributions of $\xv$ and $\yv$ with the $\yv$-marginal equal to $q$. 
	
	For the second part of the proposition, when $\lambda = 0$, our discussion in \cref{sec:aug_KLAP} shows we have $p^\ast \in \Sc(q)$. Therefore,  by \cref{lemma:optimal_pi_realizable}, we complete the proof. 
\end{proof}

\section{Theoretical results related to SFBD-OMNI}
\label{appx:sfbd_omni}

\convergence*

\textbf{Convergence to $p_\lambda^*$}. By collecting the terms involving $u_\yv$, $\Fc_\lambda (p^k, u_\yv)$ defined in \cref{eq:one_side_EOT_intepretation} can be written as 
\begin{align}
	\Fc_\lambda (p^k, u_\yv) = \Eb_q\left(\KL{u_\yv}{u_\yv^k}\right) + A_k \label{appx:eq:FcInUy}
\end{align}
where $u_\yv^k(x)$ is defined in \cref{eq:SFBD-OMNI-upd-rule} and $A_k$ contains all the terms independent of $u_\yv$. As a result, taking the minimizer of the objective in \cref{appx:eq:FcInUy} gives the update rule of $u_\yv$ in \cref{eq:SFBD-OMNI-upd-rule}.
Note that the result also shows that:
\begin{align}
	\Fc_\lambda (p^k, u_\yv^{k-1}) - \Fc_\lambda (p^k, u^k_\yv) = \Eb_q\left[\KL{u_\yv}{u_\yv^k}\right].  \label{appx:eq:Fc_monotone_decA}
\end{align}
In addition, according to the Donsker-Varadhan variational principle \citep{DonskerV1983}, when $u_\yv$ is picked to the minimizer of the current $p^k$, we have 
\begin{align}
	\Fc_\yv(p^k, u^{k+1}_\yv) = \Jc_\lambda(p_k), \label{appx:eq:Jc_Fc_rel}
\end{align}
with $ \Jc_\lambda(p_k)$ defined in \cref{eq:aug_problem_setting}.

When  $\tilde p^k$ is updated in an incrementable way as shown in \cref{eq:soft_upd_reconstructed_set}, we claim $p^{k+1}$ is updated by minimizing 
\begin{align}
	\Fc_\lambda(p, u_\yv^k) + \nu \KL{p^k}{p} 
\end{align}
with $\nu = \frac{(1 - \gamma) (1 + \lambda)}{\gamma}$. Notably, when updating ratio $\gamma = 1$, the entire sampling set $\Ec$ will be replaced, and we recover the original SFBD-OMNI. In this case, $\nu = 0$, the update of $p^k$ is then obtained by taking the minimizer of $\Fc_\lambda(p, u_\yv^k)$ with $u_\yv^k$ fixed. 

Note that
\begin{align}
	&\Fc_\lambda(p, u_\yv^k) + \nu \KL{p^k}{p} = (1 + \lambda + \nu) \, \KL{\frac{1}{1 + \lambda + \nu}(m^k_p + \lambda h + \nu p^k)}{p} + B_k, \label{appx:eq:AugFcInPk}
\end{align}
where $B_k$ collects all the terms not involving $p$ and
\begin{align}
	m_p^k(\xv) = \int q(\yv) u^k_\yv(\xv) \diff \yv. 
\end{align}
If we take $p^{k+1}$ as the minimizer of \cref{appx:eq:AugFcInPk}, we have
\begin{align}
	p^{k+1}  = \frac{1}{1 + \lambda + \nu}(m^k_p + \lambda h + \nu p^k). \label{appx:soft_pk_upd}
\end{align}
By choosing $\nu = \frac{(1 - \gamma) (1 + \lambda)}{\gamma}$, the update rule of $p^{k+1}$ coincides with the one in \cref{eq:SFBD-OMNI-upd-rule} with $\tilde p^k $ updated according to \cref{eq:soft_upd_reconstructed_set}. 

To see this, we note that, when $\nu = \frac{(1 - \gamma) (1 + \lambda)}{\gamma}$, the weights of $m_p^k = \int q(\yv)\, u_\yv^k(\xv)\, \diff \yv$ in \cref{eq:SFBD-OMNI-upd-rule} and \cref{appx:soft_pk_upd} are matched and equal to $\frac{\gamma}{1 + \lambda}$. In addition, the weight ratios between $m_p^{k-1}$ and $m_p^{k}$ (absorbed respectively in $p^k$ in \cref{appx:soft_pk_upd} and $\tilde p^k$ in \cref{eq:soft_upd_reconstructed_set}) are both $1 - \gamma$. This suggests that for both update rules, $m_p^k$'s are mixed in exactly the same way. As a result, the two update rules must be equivalent. 

The optimiality of $p^{k+1}$ also suggest, 
\begin{align*}
	&\big(\Fc_\lambda(p^k, u_\yv^k) + \nu \KL{p^k}{p^k} \big) - \big(\Fc_\lambda(p, u_\yv^k) + \nu \KL{p^k}{p^{k+1}} \big)\\
	=& (1 + \lambda + \nu) \, \KL{\frac{1}{1 + \lambda + \nu}(m^k_p + \lambda h + \nu p^k)}{p^k},
\end{align*}
which implies 
\begin{align}
	\Fc_\lambda(p^k, u_\yv^k) - \Fc_\lambda(p^{k+1}, u_\yv^k)  \geq  0. \label{appx:eq:Fc_monotone_decB}
\end{align}
As a result, according to \cref{appx:eq:Fc_monotone_decA} and \cref{appx:eq:Fc_monotone_decB}, we have $\Fc_\lambda(p^k, u_\yv^k) \rightarrow \Fc_\lambda(p^k, u_\yv^{k+1}) \rightarrow \Fc_\lambda(p^{k+1}, u_\yv^{k+1})$ decreases monotonically. Combined with \cref{appx:eq:Jc_Fc_rel}, we have $\Jc_\lambda(p_k)$ decreases monotonically as well. In addition, as $\Jc_\lambda(p_k)$ is bounded below, $\Jc_\lambda(p_k)$ much converge to some limit $\Jc^\infty_\lambda$. As a result, for every subsequence of $p_k$, it must converge to some cluster point $\bar p$, where $\bar p$ is then a fixed point under the update rules in \cref{eq:SFBD-OMNI-upd-rule}. That is, 
\[
\bar p  = \frac{1}{1 + \lambda + \nu}(m_{\bar p} + \lambda h + \nu \bar p),
\]
where $m_{\bar p}(\xv) = \int q(\yv) \bar u_\yv(\xv) \diff \yv$ and the posterier distribution $ \bar u_\yv(\xv) = \bar p(\xv \mid \yv)= \frac{\bar p (\xv) r(\xv \mid \yv)}{\Tc_r \bar p (\yv)}$. After rearrangement, we have
\begin{align}
	\frac{1}{1 + \lambda} m_{\bar p} + \frac{\lambda}{1 + \lambda} h  = \bar p.  \label{appx:eq:stationary_cond}
\end{align}
We complete the proof of the optimal convergence by showing that the only $p$ satisfying \cref{appx:eq:stationary_cond} is $p_\lambda^*$. 

\begin{lemma}
\label{lem:KKTVsStationary}
The following are equivalent for such $p$:
\begin{align}
& p(\xv)=\frac{1}{1+\lambda}\int q(\yv)\,p(\xv|\yv)\,\diff y+\frac{\lambda}{1+\lambda}\,h(\xv), \label{eq:F}
\\[0.4em]
& \exists\,\mu\in\mathbb{R}\ \text{ s.t. }\ 
-\!\int \frac{q(\yv)}{\Tc_r (\yv)}\,r(\yv|\xv)\,\diff \yv-\lambda\,\frac{h(\xv)}{p(\xv)}+\mu=0
\label{eq:KKT}
\end{align}
Moreover as $\Jc_\lambda$ is strictly convex on the probability simplex,
so any solution of \eqref{eq:F} is the unique global minimizer of $\Jc_\lambda$.
\end{lemma}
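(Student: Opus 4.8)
The plan is to recognize \eqref{eq:KKT} as the first-order (KKT) stationarity condition for the convex program $\min_{p\in\Pc(X)}\Jc_\lambda(p)$ with $\Jc_\lambda(p)=\KL{q}{\Tc_r p}+\lambda\KL{h}{p}$, then to show by an elementary ``multiply by $p(\xv)$ and integrate'' manipulation that this stationarity condition is equivalent to the fixed-point identity \eqref{eq:F}, and finally to read off the ``moreover'' claim from strict convexity. Throughout I restrict to densities $p$ with $\KL{h}{p},\KL{q}{\Tc_r p}<\infty$, on which $\Jc_\lambda$ is finite; in particular $h\ll p$, and under the full-support convention on $r$ one also has $\Tc_r p$ bounded away from $0$ on $\supp q$.

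\textbf{Step 1 (Euler--Lagrange / KKT).} I would perturb a candidate density along the admissible directions $p_\epsilon=(1-\epsilon)p+\epsilon\eta$, for an arbitrary probability density $\eta$, and differentiate $\Jc_\lambda(p_\epsilon)$ at $\epsilon=0$. Linearity of $\Tc_r$ gives, after differentiating under the integral sign, that the directional derivative equals $\int\bigl(\eta(\xv)-p(\xv)\bigr)\Bigl(-\int \tfrac{q(\yv)\,r(\yv\mid\xv)}{\Tc_r p(\yv)}\,\diff\yv-\lambda\,\tfrac{h(\xv)}{p(\xv)}\Bigr)\diff\xv$. Since $\Jc_\lambda$ is convex, its vanishing for all probability densities $\eta$ is both necessary and sufficient for $p$ to be a global minimizer; introducing the multiplier $\mu$ for the constraint $\int p=1$ (the nonnegativity constraints being inactive because $\KL{h}{p}<\infty$ keeps $p$ positive wherever $h$ is), the resulting Euler--Lagrange equation is exactly \eqref{eq:KKT}. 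Hence $p$ satisfies \eqref{eq:KKT} iff $p$ minimizes $\Jc_\lambda$.

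\textbf{Step 2 (equivalence with \eqref{eq:F}) and the ``moreover''.} Starting from \eqref{eq:KKT}, I multiply through by $p(\xv)$ and apply Bayes' rule $p(\xv)\,r(\yv\mid\xv)/\Tc_r p(\yv)=p(\xv\mid\yv)$, which turns it into $\mu\,p(\xv)=\int q(\yv)\,p(\xv\mid\yv)\,\diff\yv+\lambda\,h(\xv)$. Integrating over $\xv$ and using that $q$, $h$ and $\xv\mapsto\int q(\yv)p(\xv\mid\yv)\diff\yv$ are all probability densities forces $\mu=1+\lambda$, which is precisely \eqref{eq:F}. Conversely, from \eqref{eq:F} integration again yields $\mu=1+\lambda$, and \eqref{eq:F} itself gives $p\ge\tfrac{\lambda}{1+\lambda}h$, so $h\ll p$ and dividing by $p(\xv)$ on $\{p>0\}$ and reusing Bayes' rule recovers \eqref{eq:KKT}. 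For the last claim: when $\lambda>0$, $p\mapsto\KL{q}{\Tc_r p}$ is convex (since $\Tc_r$ is linear and $\KL{\cdot}{\cdot}$ is convex in its second argument) and $p\mapsto\lambda\KL{h}{p}$ is strictly convex, so $\Jc_\lambda$ is strictly convex on the simplex; the KKT conditions then characterize the global minimizer, and strict convexity makes it unique, so any solution of \eqref{eq:F} equals $p_\lambda^\ast$.

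\textbf{Expected obstacle.} The algebra in Step 2 is routine; the delicate part is making Step 1 rigorous in this infinite-dimensional setting -- justifying the interchange of differentiation and integration in the $\KL{q}{\Tc_r p}$ term (which is where boundedness of $\Tc_r p$ away from $0$ on $\supp q$, guaranteed by injecting the infinitesimal noise into $r$, is used) and arguing that a minimizer exists and lies in the relative interior of the feasible set so that only the normalization constraint binds. I would handle this by working on the convex domain of densities dominating $h$ on which $\Jc_\lambda$ is finite and, for $\lambda>0$, coercive/lower semicontinuous enough to admit a minimizer with $h\ll p$, and by taking one-sided derivatives along the interior-pointing combinations $p_\epsilon$ above, which keeps every perturbation inside the admissible set and sidesteps the boundary.
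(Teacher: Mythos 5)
Your proposal is correct and follows essentially the same route as the paper: the equivalence of \eqref{eq:F} and \eqref{eq:KKT} via multiplying by $p(\xv)$, applying Bayes' rule, and integrating to pin down $\mu=1+\lambda$ is exactly the paper's argument, and your identification of \eqref{eq:KKT} as the first-order condition of the constrained convex program (with strict convexity giving uniqueness) matches the paper's Lagrangian/Gateaux-derivative reasoning. The only difference is that you spell out the directional-derivative and domain-of-finiteness details more explicitly than the paper does, which is a refinement rather than a different approach.
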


\begin{proof}
\emph{($\Rightarrow$).}
Multiply \eqref{eq:KKT} by $p(\zv)$ and apply
\[
\int \frac{q(\yv)}{\Tc_r(\yv)}\,r(\yv|\xv)\,\diff \yv
=\frac{1}{p(\xv)}\int q(\yv)\,p(\xv|\yv)\, \diff \yv,
\]
to obtain
\[
\int q(\yv)\,p(\xv|\yv)\,\diff \yv + \lambda\,h(\xv)=\mu\,p(\xv).
\]
Integrate both sides over $\xv$:
\[
1+\lambda=\mu\int p(\xv)\,\diff \xv=\mu \ \Rightarrow\ \mu=1+\lambda,
\]
and substitute back to get \eqref{eq:F}.

\emph{($\Leftarrow$).}
Starting from \eqref{eq:F}, rearrange:
\[
(1+\lambda)p(\xv)-\lambda h(\xv)=\int q(y)\,p(\xv|\yv)\,\diff \yv
= p(\xv)\!\int \frac{q(\yv)}{\Tc_r(\yv)}\,r(\yv|\xv)\,\diff \yv.
\]
Divide by $p(\yv)>0$ and rearrange:
\[
-\!\int \frac{q(\yv)}{\Tc_r(\yv)}\,r(\yv|\xv)\,\diff \yv-\lambda\,\frac{h(\xv)}{p(\xv)}+(1+\lambda)=0,
\]
which is \eqref{eq:KKT} with $\mu=1+\lambda$.

In addition, the Lagrangian for $\min_{p\ge0,\int p=1}\Jc_\lambda(p)$ is
$\Lc(p)=\Jc_\lambda(p)+\mu\left(\int p-1\right)$.
For interior $p>0$, the Gateaux derivative of $\Jc_\lambda$ at $p$ equals the left side of
\eqref{eq:KKT} minus $\mu$. Thus \eqref{eq:KKT} is the first-order condition
$\nabla \mathcal{L}(p)=0$. Since $\Jc_\lambda$ is convex, any interior stationary point is a global minimizer.
\end{proof}

By \cref{lem:KKTVsStationary}, we know that $\bar p = p^*_\lambda$ is the unique minimizer of $\Jc_\lambda$ for $\lambda > 0$. Moreover, as $\lambda \to 0$, \cref{prop:limitingLambdaCase} implies that $p^*_\lambda \to h^\dagger$, which establishes the first part of the statement.

\textbf{Convergence rate when $\lambda \rightarrow 0$.} Define $\Hc(p) = \KL{h^\dagger}{p}$. When $\lambda \rightarrow 0$, as $\nu = \frac{(1 - \gamma) (1 + \lambda)}{\gamma} = \frac{1 - \gamma}{\gamma}$, \cref{appx:soft_pk_upd} reduces to
\begin{align}
	p^{k+1} = \gamma \, m_p^k + (1-\gamma) \, p^k,
\end{align}
where $m_p^k(\xv) = \int q(\yv) u^k_\yv(\xv) \diff \yv$ and $u^k_\yv$ is updated according to \cref{eq:SFBD-OMNI-upd-rule}. Then by the convexity of the KL divergence, we have
\begin{align}
	\Hc(p^{k+1}) \leq (1-\gamma) \, \Hc (p^k) + \gamma \, \Hc(m_p^k).
\end{align}
Rearrangement yields
\begin{align}
	\Hc(m_p^k) \geq \Hc(p^k) + \frac{1}{\gamma} \big(\Hc(p^{k+1}) - \Hc(p^k) \big). \label{appx:eq:HcConv}
\end{align}
Let $H^\dagger$ denote the joint distribution induced by $h^\dagger(\xv) r(\yv|\xv)$ and likewise $P^k$ the one by $p^k(\xv) r(\yv|\xv)$.  In addition, let $u^\dagger_{\yv}$ denote the posterior distribution of $H^\dagger$. Note that, as $h^\dagger \in \Sc(q)$, we have $h^\dagger(\xv) r(\yv \mid \xv) = q(\yv) u^\dagger_{\yv}(\xv)$. 
 Then, by the disintegration theorem \citep{VargasTLL2021}, we have 
\begin{align*}
	\Hc(p^k) &= \KL{h^\dagger}{p^k} = \KL{H^\dagger}{P^k} = \KL{q}{\Tc_r p^k} + \Eb_{H^\dagger}\left[\KL{u_\yv^\dagger}{u_\yv^k}\right]\\
	&\geq \KL{q}{\Tc_r p^k} + \KL{h^\dagger}{m_p^k} \stackrel{\eqref{appx:eq:HcConv}}{\geq} \KL{q}{\Tc_r p^k} + \Hc(p^k) + \frac{1}{\gamma}\left(\Hc(p^{k+1}) - \Hc(p^k)\right). 
\end{align*}
Cancel out $\Hc(p^k)$ and rearrange to obtain the monotonic decrease of $\KL{h^\dagger}{p^k}$
\begin{align}
	- \gamma \KL{q}{\Tc p^k} \geq \Hc(p^{k+1}) - \Hc(p^k). 
\end{align}
Telescoping it yields
\begin{align}
	\Hc(p^0) \geq \sum_{k=0}^K \big[ \Hc(p^k) - \Hc(p^{k+1})\big] \geq \gamma \sum_{k=1}^K  \KL{q}{\Tc p^k}. 
\end{align}
As a result,
\begin{align}
	\min_{k \in \{1,2,\ldots, K\}}  \KL{q}{\Tc p^k} \leq \frac{\Hc(p^0)}{\gamma K} = \frac{\KL{h^\dagger}{p^0}}{\gamma K}.  
\end{align}

%



\section{Experiment configurations}
\label{appx:expConfig}
All SFBD-OMNI models were trained on one to four L40 GPUs using a SLURM scheduling system. With the standard SFBD-OMNI, training on CIFAR-10 takes about 5 days and on CelebA about 8 days. The online variant is more efficient, requiring roughly 4 days for CIFAR-10 and 6 days for CelebA.

\subsection{Model architectures}
We implement the proposed methods using the EDM backbone~\citep{KarrasAAL22} without preconditioning, and adopt this configuration throughout our empirical studies. The training pipeline is built on flow matching~\citep{LipmanCBNL2023}.
\begin{table}[H]
    \centering
    \caption{Experimental Configuration for CIFAR-10 and CelebA}
    \label{tab:experiment-config}
    \renewcommand{\arraystretch}{1.2} 
    \setlength{\tabcolsep}{6pt} 
    \resizebox{\textwidth}{!}{
    \begin{tabular}{p{4cm} p{5cm} p{5cm}} 
        \toprule
        \textbf{Parameter} & \textbf{CIFAR-10} & \textbf{CelebA} \\
        \midrule
        \textbf{General} & & \\
        Batch Size & 256 & 256 \\
        Loss Function & \texttt{Flow matching loss} \citep{LipmanCBNL2023} & \texttt{Flow matching loss} \citep{LipmanCBNL2023}\\
        Denoising Method &  torchdiffeq \citep{Chen2018} & torchdiffeq \citep{Chen2018} \\
        Sampling Method &  torchdiffeq \citep{Chen2018} & torchdiffeq \citep{Chen2018}  \\
        \midrule 
        \textbf{Network Configuration} & & \\
        Dropout & 0.3 & 0.3 \\
        Channel Multipliers & $\{2, 2, 2\}$ & $\{2, 2, 2\}$ \\
        Model Channels & 128 & 128 \\
        Channel Mult Noise & 2 & 2 \\
        \midrule
        \textbf{Optimizer Configuration} & & \\
        Optimizer Class & \texttt{RAdam} \citep{KingmaBa2014, LiuJHCLGH2020} & \texttt{RAdam}  \citep{KingmaBa2014, LiuJHCLGH2020} \\
        Learning Rate & 0.0001 & 0.0001 \\
        Betas & (0.9, 0.95) & (0.9, 0.95) \\
        \bottomrule
  \end{tabular}
  }%
\end{table}

\subsection{Datasets}
All experiments on CIFAR-10~\citep{Krizhevsky2009} and CelebA~\citep{LiuLWT2015} are performed using only the training splits. For FID evaluation, each model generates 50,000 samples, and the score is computed against the entire training set. 

\newpage

\section{Sampling results}
\label{appx:sampling_results}
\subsection{CIFAR-10}

\begin{figure}[H]
    \centering
    \subfigure[SFBD-OMNI (FID: 21.31)]{
        \includegraphics[width=0.48\textwidth]{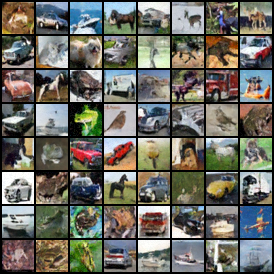}
    }
    \subfigure[Online SFBD-OMNI (FID: 22.43)]{
        \includegraphics[width=0.48\textwidth]{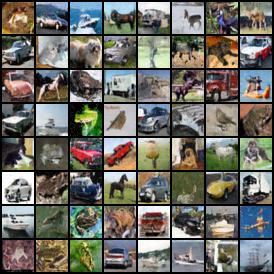}
    }
    \caption{Pixel Masking}
\end{figure}

\begin{figure}[H]
    \centering
    \subfigure[SFBD-OMNI (FID: 10.81)]{
        \includegraphics[width=0.48\textwidth]{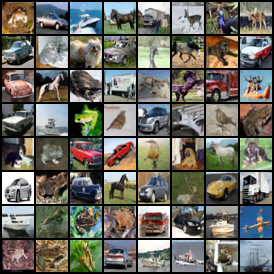}
    }
    \subfigure[Online SFBD-OMNI (FID: 11.06)]{
        \includegraphics[width=0.48\textwidth]{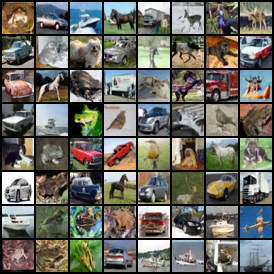}
    }
    \caption{Addictive Gauss.}
\end{figure}

\newpage

\begin{figure}[H]
    \centering
    \subfigure[SFBD-OMNI (FID: 32.61)]{
        \includegraphics[width=0.48\textwidth]{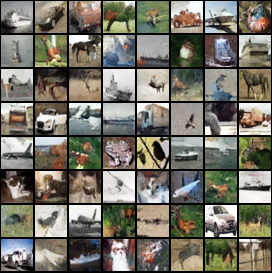}
    }
    \subfigure[Online SFBD-OMNI (FID: 31.32)]{
        \includegraphics[width=0.48\textwidth]{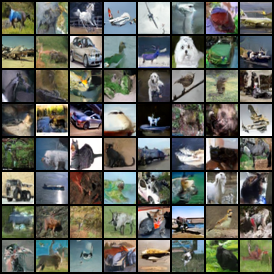}
    }
    \caption{Grayscale}
\end{figure}

\subsection{CelebA}

\begin{figure}[H]
    \centering
    \subfigure[SFBD-OMNI (FID: 11.60)]{
        \includegraphics[width=0.48\textwidth]{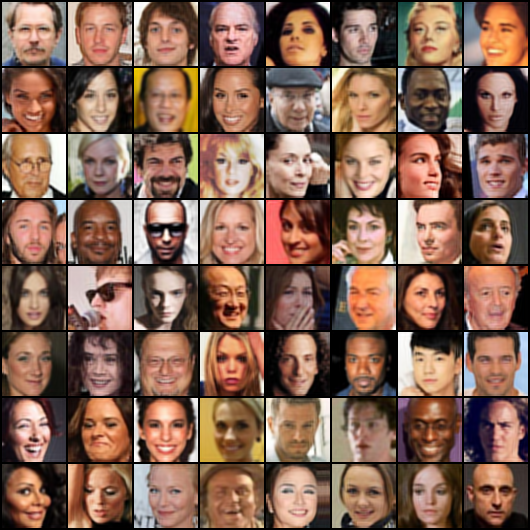}
    }
    \subfigure[Online SFBD-OMNI (FID: 10.28)]{
        \includegraphics[width=0.48\textwidth]{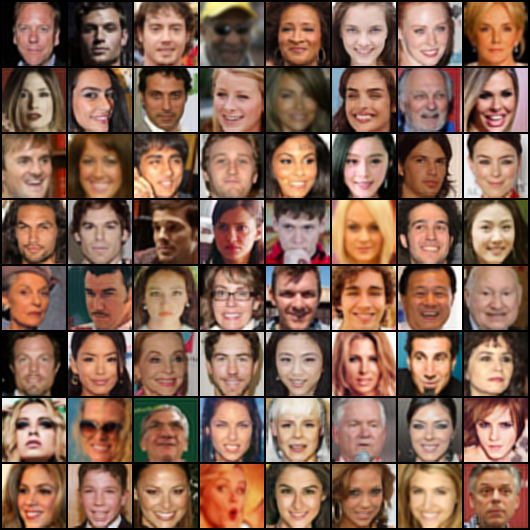}
    }
    \caption{Gauss. Blur}
\end{figure}

\newpage

\begin{figure}[H]
    \centering
    \subfigure[SFBD-OMNI (FID: 11.85)]{
        \includegraphics[width=0.48\textwidth]{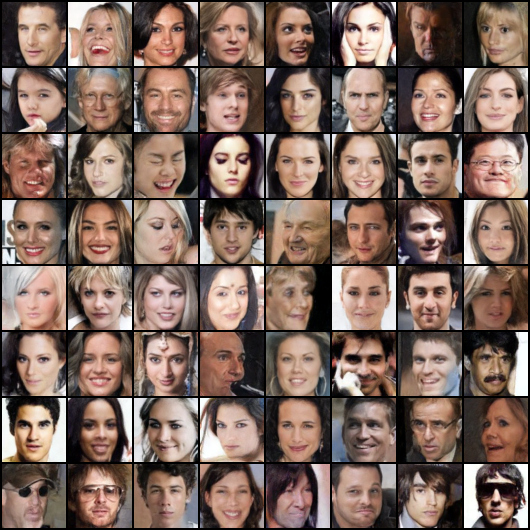}
    }
    \subfigure[Online SFBD-OMNI (FID: 11.21)]{
        \includegraphics[width=0.48\textwidth]{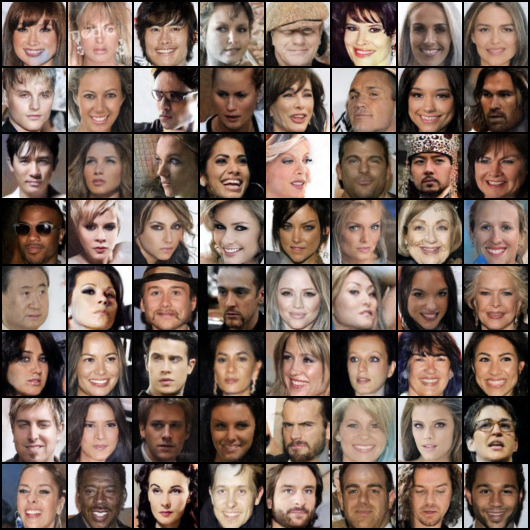}
    }
    \caption{Grayscale}
\end{figure}

\section{Discussion on Ambient Diffusion Omni}
\label{appx:amb_diff_omni_compare}
\modify{
Ambient diffusion-Omni (Ambient-o) incorporates corrupted samples by injecting additional Gaussian noise. The key idea is that once sufficient Gaussian noise is added, the corrupted-noisy distribution and the clean-noisy distribution become harder to distinguish. This observation suggests that a corrupted sample, after being further noised, can effectively be treated as a noised clean sample, allowing it to be used in standard diffusion-model training. This effect does not depend on the specific form of the corruption process, allowing AD-OMNI to operate without requiring knowledge of the corruption mechanism. However, this strategy comes with an inherent trade-off. While heavy noising helps align corrupted samples with clean ones, it also risks erasing useful structure and details within the observations. In other words, sufficient noise is needed for Ambient-o to function as intended, but excessive noise may suppress the informative signal that could otherwise benefit model learning. 

In contrast, SFBD-OMNI does not inject additional noise into the samples and therefore preserves the full information of the observations. Rather than relying on excessive noising to align distributions, our method leverages knowledge of the corruption process itself, avoiding information destruction while still enabling effective training. 

In \cref{tb:ambo_vs_online_SFBD_OMNI}, following the Ambient-o setting, we apply a Gaussian blur with varying strengths $\sigma$ and assume access to $10\%$ clean samples.
\begin{table}[h!]
\centering
\begin{tabular}{ccc}
\toprule
\textbf{Blur Strength ($\sigma$)} & \textbf{Ambient-o} & \textbf{Online SFBD-OMNI} \\
\midrule
0.6 & 5.34 & 0.97 \\
1.0 & 6.16 & 3.07 \\
\bottomrule
\end{tabular}
\caption{Ambient-o vs.\ online SFBD-OMNI: FID under Gaussian blur of varying strengths.}
\label{tb:ambo_vs_online_SFBD_OMNI}
\end{table}
The table shows that, by fully leveraging the information contained in the corrupted samples, SFBD-OMNI achieves substantially lower FID across blur levels, outperforming Ambient-o by a large margin.
}

\section{Pretraining models using samples from a similar distribution}
\label{appx:pretrain_similar_distribution}
\modify{

As mentioned in \cref{sec:emp}, according to our theory, when the corruption function is identifiable, a small number of clean samples are needed only to obtain a good initial distribution $p_0$. This also implies that if clean samples from the target distribution are unavailable, it is acceptable to use samples from a similar distribution instead. To demonstrate this, we pretrain the model on CIFAR-10 using clean samples from the truck class, and then apply iterative optimization to recover the distributions of automobile, ship, and horse, where all samples are corrupted by additive Gaussian noise with noise level $\sigma = 0.2$. The FID scores before and after iterative optimization are shown in \cref{tb:out_of_distr_pretrain}. 

\begin{table}[h!]
\centering
\begin{tabular}{lcc}
\toprule
\textbf{Class} & \textbf{After Pretrain} & \textbf{Final} \\
\midrule
Automobile          & 8.36  & 6.19  \\
Ship                & 13.96 & 8.78  \\
Horse               & 25.87 & 13.55 \\
Horse (no pretrain) & --    & 80.17 \\
\bottomrule
\end{tabular}
\caption{FID comparison across CIFAR-10 classes before and after finetuning, with pretraining conducted on the truck class.}
\label{tb:out_of_distr_pretrain}
\end{table}

As the table shows, for classes similar to truck -- such as automobile -- the model successfully recovers the target distribution, as indicated by the low final FID. For classes that are less similar, pretraining still provides substantial benefits. In particular, for horse, pretraining on the truck class reduces the final FID dramatically from 80.17 (without pretraining) to 13.55, illustrating the importance of a good initial distribution even when the clean samples are drawn from a different -- but related -- class. (Notably, the horse and truck classes still share several low-level characteristics such as edges and common background elements like grass or road surfaces.)
}

\section{Supplementary empirical results in latent space}
\label{appx:suppl_latent}
In this section, we provide additional empirical results on high-resolution satellite (256 x 256) and MRI datasets (320 x 320) corrupted by Poisson noise and compressive sensing (CS). For MRI, the experiments are conducted in the latent space for computational efficiency. The results remain consistent with those in the main text, further validating the effectiveness of SFBD-OMNI across diverse corruption settings. At the same time, qualitative inspection reveals visible reconstruction artifacts, indicating remaining limitations and motivating future work that incorporates stronger priors or modality-specific inductive biases.

\textbf{Satellite images and Poisson noise.}
We use satellite images from the training split of the NWPU-RESISC45 dataset
\citep{ChengHL2017}, which contains 45 scene classes with 600 images per class.

For this dataset, we consider Poisson noise corruption. Poisson noise arises naturally in photon-limited imaging systems, including satellite and remote-sensing cameras, where the number of detected photons per pixel is inherently stochastic and follows Poisson statistics \citep{Hasinoff2014,Schott2007}. Following common practice in Poisson-noise simulation studies, we vary the photon budget as $\alpha \in {10, 50, 100}$, corresponding to severe, moderate, and mild shot-noise conditions, respectively. We simulate Poisson noise by interpreting each pixel intensity $x_{i,j,c} \in [0,1]$ as a normalized photon arrival rate and sampling 
\[
    y_{i,j,c} \sim \frac{1}{\alpha}\,\mathrm{Poisson}(\alpha\, x_{i,j,c}),
\]
followed by clipping the resulting values to the valid range $[0,1]$ \citep{MakitaloF2011}.

\textbf{MRI image set and compressive sensing corruption.} We conduct our experiments on the fastMRI brain dataset \citep{ZbontarKSMHMDS2018}, using its multicoil training subset, which provides fully sampled raw $k$-space data from clinical brain MRI scans. For each volume, we discard the final four slices, as these typically contain little or no brain anatomy. After filtering, the dataset contains 52,778 MRI slices, from which we randomly sample 2,000 as the clean set.

For this dataset, we consider the compressive sensing degradation, which is a natural corruption model for MRI. In particular, MRI scanners do not acquire images directly; instead, they measure the spatial frequencies of the underlying anatomy in $k$-space \citep{LustigDP2007}. The acquisition process therefore corresponds to sampling the Fourier transform of the image. Because clinical MRI protocols routinely undersample $k$-space to shorten scan time, compressed-sensing MRI accelerates acquisition by collecting only a subset of frequency coefficients and relying on reconstruction algorithms to recover the missing data. Consequently, partial Fourier undersampling is not an artificial degradation, but a realistic and practically motivated corruption process for accelerated MRI.

To simulate a realistic compressive sensing degradation, we follow the standard undersampled MRI acquisition model of \citet{LustigDP2007}. Given an image $x \in \Rb^{H \times W}$, the corrupted observation is obtained by undersampling its Fourier transform:
\[
    y = P_{\Omega}\!\left( \mathcal{F}(x) \right),
\]
where $\Fc$ denotes the 2-D discrete Fourier transform and $P_{\Omega}$ is a binary mask selecting a subset $\Omega$ of frequency coefficients. We use a fixed variable-density sampling mask, generated once at the beginning of the experiment and reused for all samples. Following common practice in compressed-sensing MRI, the central low-frequency region of $k$-space (10\% of the spatial extent) is fully sampled to preserve global structure, while coefficients outside this region are sampled independently with probability $0.20$. This produces a realistic and reproducible compressive sensing corruption operator that retains essential low-frequency content while heavily undersampling high-frequency components. 

\textbf{Implementation.} For satellite images, we continue using the model architectures described in \cref{appx:expConfig}. For the experiments on MRI, we use the pretrained autoencoder (VAE) from Stable Diffusion v1.5 \citep{RombachBLEO2021} to encode images into the latent space and to decode the model outputs. We keep the model architectures described in \cref{appx:expConfig} unchanged, except for adjusting the input and output channels to $4$ to match the dimensionality of the latent representations.

\begin{table}[t!]
\centering
\begin{tabular}{lccc c}
\toprule
\multirow{2}{*}{\textbf{Stage}} &
\multicolumn{3}{c}{\textbf{Satellite — Poisson Noise}} &
\multirow{2}{*}{\makecell{\textbf{MRI} \\ \textbf{Compressive Sensing}}}
\\
& $\alpha = 10$ & $\alpha = 50$ & $\alpha = 100$ & \\
\midrule
After Pretrain & 9.32 & 5.71 & 4.43 & 36.98 \\
Final Result   & 7.11 & 4.13 & 3.40 & 28.71 \\
\bottomrule
\end{tabular}
\caption{FID Scores of Online SFBD-OMNI for satellite images (Poisson noise with $\alpha = 10$, 50, 100) and MRI scans (compressive sensing).}
\label{tb:fid_latent}
\end{table}

\begin{figure}[t!]
    \centering
    \includegraphics[width=0.95\textwidth]{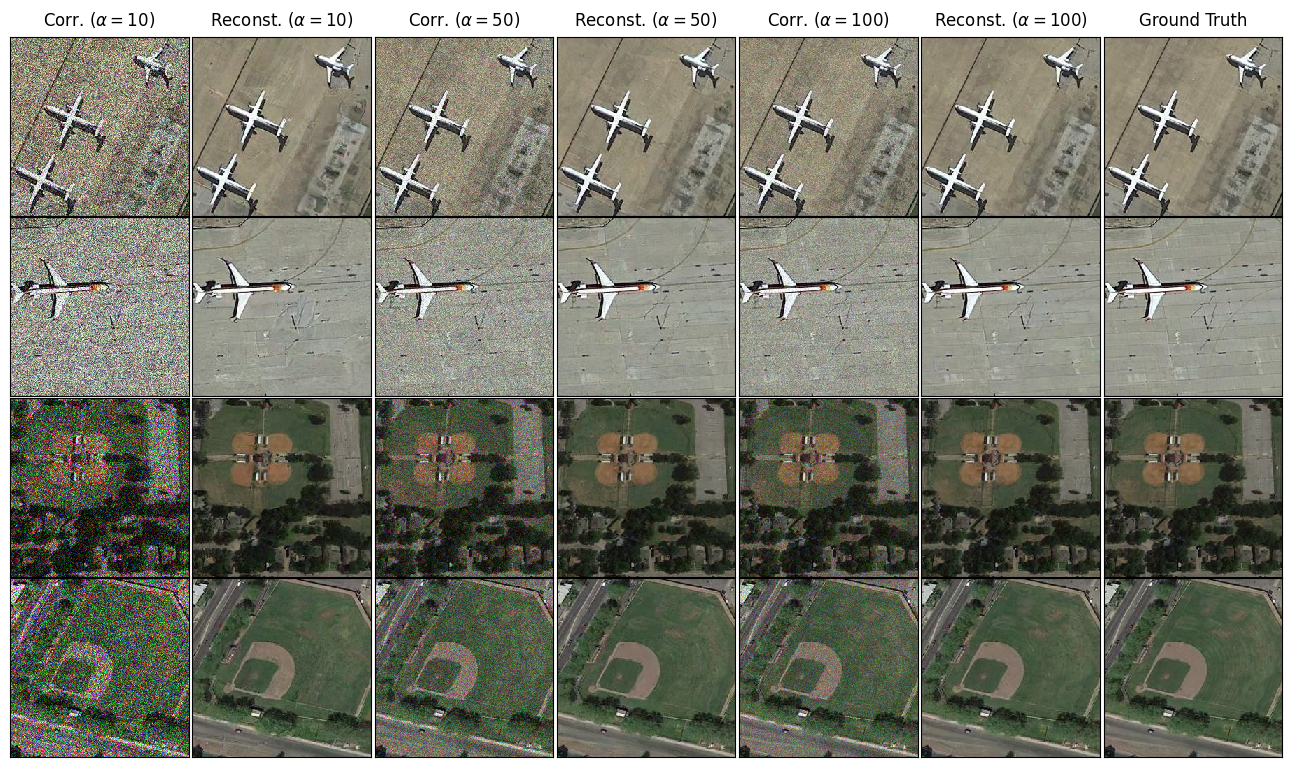}
    \caption{Reconstructed Satellite Images -- Poisson Noise (photo budget $\alpha = 10, 50, 100$). }
    \label{fig:satellite_poisson}
\end{figure}

\begin{figure}[t!]
    \centering
    \includegraphics[width=0.6\textwidth]{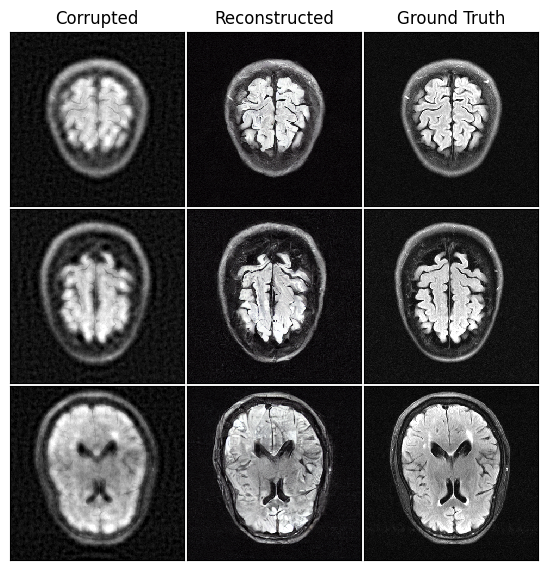}
    \caption{Reconstructed MRI -- Compressive Sensing}
    \label{fig:mri_cs}
\end{figure}

\textbf{Results.} \cref{tb:fid_latent} summarizes the FID performance of online SFBD-OMNI for satellite images with Poisson corruption and MRI scans under compressive sensing, measured after pretraining and after online iterative refinement. Across all evaluated settings, the online phase yields a clear improvement over the pretrained model, demonstrating the effectiveness of SFBD-OMNI as a general reconstruction framework for real-world corruption processes.

We provide qualitative reconstructions in \cref{fig:satellite_poisson,fig:mri_cs}. On satellite images, SFBD-OMNI visibly recovers large-scale structures -- such as building layouts, runway geometry, and aircraft outlines -- that are heavily disrupted by Poisson shot noise. In the MRI setting, despite severe undersampling, the method reconstructs coherent tissue boundaries and globally consistent anatomical structure. 

\textbf{Limitations and noise sensitivity.}
While our framework produces promising reconstructions, the Poisson-noise results also reveal a clear limitation. As the photon budget decreases (i.e., noise increases), output quality degrades, with more residual artifacts and reduced fine-grained fidelity. This trend is reflected quantitatively in \cref{tb:fid_latent}, where performance drops moving from $\alpha{=}100$ to $\alpha{=}10$. Qualitative examples in \cref{fig:satellite_poisson} further highlight these failure modes--under extreme shot noise, texture-level restoration remains challenging and fine structure is only partially recovered. Likewise, for MRI samples corrupted by compressive sensing, we still notice some visible artifacts as shown in \cref{fig:mri_cs}. 

Across both Poisson-corrupted satellite imagery and compressive-sensed MRI scans, SFBD-OMNI reliably recovers the global structure of the underlying data distribution, but struggles to reconstruct fine details, particularly under severe corruption. Poisson noise reduces the amount of usable information in low-photon settings, and similarly, heavy MRI undersampling restricts the available signal for reconstruction. In such cases, recovering the clean distribution becomes inherently more challenging and may require stronger priors or model-specific inductive biases.

\end{document}